\documentclass[accepted]{uai2026} 
                        
\usepackage[american]{babel}

\usepackage{natbib} 
\usepackage{mathtools} 
\usepackage{booktabs} 
\usepackage{tikz} 

\usepackage{subcaption}

\usepackage{amsfonts}
\usepackage{amsthm}
\usepackage{amsmath}  
\usepackage{bm}       
\usepackage[table, dvipsnames]{xcolor}
\usepackage{algorithm}
\usepackage{algpseudocode}

\usepackage{colortbl}
\usepackage{xcolor}

\usepackage[capitalize,noabbrev]{cleveref}
\usepackage{thmtools}
\usepackage{thm-restate}
\usepackage{xurl}

\definecolor{topcell}{gray}{0.9} 

\newtheorem*{definition}{Definition}

\newtheorem{remark}{Remark}
\newtheorem*{remark*}{Remark}
\newtheorem{lemma}{Lemma}

\usepackage{booktabs}   
\usepackage{multirow}   
\usepackage{graphicx}   
\usepackage{array}

\usepackage{graphicx}
\usepackage{breqn}
\usepackage{tabularx}
\usepackage{siunitx} 
\usepackage{import}
\usepackage{caption}

\newcommand{\E}{\mathbb{E}}
\newcommand{\reals}{\mathbb{R}}
\newcommand{\opA}{\mathcal{A}}

\newcommand{\rv}{\bm{r}}
\newcommand{\betav}{\bm{\beta}}

\newcommand{\norm}[2]{\| #1 \|_{#2}}
\newcommand{\abs}[1]{\left|#1\right|}
\newcommand{\midand}{\quad \text{and} \quad}

\title{Finding the Signal in the Spam: Jointly Learning \\ Rewards and Worker Reliability from Pairwise Comparisons}

\author[1,*]{\href{mailto:kaustubhshejole@cse.iitb.ac.in?Subject=UAI 2026}{Kaustubh Shivshankar Shejole}{}}
\author[1,*]
{\href{mailto:tanishagarwal2910@gmail.com?Subject=UAI 2026}{Tanish Agarwal}{}}
\author[1]{Arpit Agarwal}
\author[1]{Avishek Ghosh}

\affil[*]{Equal contribution}
\affil[1]{%
    IIT Bombay\\
    Mumbai, India -- 400076
}

\begin{document}
\maketitle

\begin{abstract}
The problem of learning from pairwise comparisons has been widely studied across many domains such as recommendation systems, social choice, and more recently, fine-tuning large language models. In this problem, the goal is to learn item rewards based on pairwise comparisons between them. 
In many scenarios, these comparisons are elicited from crowdworkers using platforms such as Amazon Mechanical Turk, Scale AI, etc.
However, crowdworkers are often unreliable
due to limited
domain knowledge or revenue-maximizing (spamming) behavior.
In this work, 
our goal is to understand 
whether worker reliability (competency) can be learned jointly with item rewards.
To this end, we adopt the Boltzmann-rational model for pairwise comparisons, which extends the Bradley–Terry–Luce model by incorporating worker competencies.
We derive an EM-based algorithm for learning under this model by introducing Polya-Gamma latent variables
to transform the logistic likelihood into a conditionally Gaussian form, enabling tractable optimization and leading to a simplified $Q$ function in the E-step of the algorithm. This technique allows us to reduce our formulation to a matrix sensing problem, using which we establish theoretical convergence guarantees for our algorithm.
We conduct extensive experiments on real-world and synthetic datasets. 
These experiments demonstrate the advantages of using our algorithm over several baselines and confirm its strong robustness to both spammers and adversarial workers, highlighting its practical effectiveness in realistic crowdsourcing and reward learning settings.\footnote{The code and data is publicly available at \url{https://github.com/KaustubhShejole/BoRa_EM}.}

\end{abstract}
\section{Introduction}\label{sec:intro}
Reward learning from pairwise comparisons is a widely studied problem in machine learning, with applications in recommendation systems, information retrieval, social choice, and, more recently, fine-tuning of large language models.
In this problem, given pairwise comparisons between $K$ items,
the goal is to learn a reward for each item that is ``consistent'' with the observed comparisons.
For example, in recommendation systems, one observes the choices of users over recommended items and the goal is to learn a score for each item \citep{kalloori2018eliciting};
in social choice, one observes preferences of individuals over various policy alternatives and the goal is to aggregate
the preferences into a single ranked list \citep{list2013social}; and in fine-tuning large language models, one observes pairwise comparisons between different model responses and the goal is to learn a reward for each response \citep{bakker2022fine, 10.5555/3495724.3495977}.

Given the large-scale requirement of preference data in these applications,  comparisons between items are often elicited using human crowdworkers  \citep{velayati2025crowdsourcing, chen2013pairwise}.
Due to this, many crowdsourcing marketplaces such as Amazon Mechanical Turk, Scale AI, Figure Eight, and Yandex Toloka have gained prominence in recent years.
However, it has been observed that crowdworkers often exhibit large variability in the quality of labels \citep{gadiraju2015understanding, ding2022effectiveness}. There may be several factors leading to this variability such as insufficient knowledge of the domain, lack of time/effort devoted to the tasks, and revenue maximizing/spamming behavior. The default approach adopted by many platforms is to identify a gold-standard dataset and evaluate worker competency on it \citep{chen2013pairwise}. However, in many scenarios such datasets are not available, and requiring workers to answer gold-standard questions can be inefficient and costly.


In this work, we study the problem of jointly estimating item rewards and worker competencies from pairwise comparison data, without requiring a gold-standard dataset. We adopt the Boltzmann-rational model, in which each item $j \in [K]$ has a latent reward $r_j$ and each worker $s \in [M]$ has a competency parameter $\beta_s \in [-1,1]$. The probability that worker $s$ prefers item $w$ over item $l$ is given by
\[
P[w \succ l \mid s] = \sigma\!\bigl(\beta_s(r_w - r_l)\bigr),
\]
where $\sigma$ denotes the logistic function. This model captures a spectrum of worker behaviors: $\beta_s = 1$ corresponds to an ideal annotator, $\beta_s = 0$ to a random spammer, and $\beta_s = -1$ to an adversarial worker. Given a dataset $\mathcal{D} = \{(w_i, l_i, s_i)\}_{i=1}^N$ of $N$ pairwise comparisons, the goal is to jointly estimate the reward vector $\mathbf{r} \in \mathbb{R}^K$ and the competency vector $\boldsymbol{\beta} \in [-1,1]^M$.

We derive an expectation-maximization (EM) algorithm for inference under this model. To derive the EM algorithm, we use a P\'{o}lya-Gamma augmentation \citep{polson2013bayesian}, which introduces auxiliary variables  to transform the logistic likelihood into a conditionally Gaussian form. This leads to a quadratic objective for the M-step which enables efficient alternating maximization over $\boldsymbol{\beta}$ and $\mathbf{r}$ in closed form (Section~\ref{sec:polya_gamma_augmentation}).
While this alternating minimization can be solved efficiently, it is not guaranteed to converge to a global optimum. 

To address this, we show that the M-step optimization can be cast as a rank-1 matrix sensing problem. Specifically, defining $X = \boldsymbol{\beta}\mathbf{r}^\top \in \mathbb{R}^{M \times K}$ and a linear operator $\mathcal{A}: \mathbb{R}^{M \times K} \to \mathbb{R}^N$, the M-step reduces to minimizing $\|\mathcal{A}(\boldsymbol{\beta}\mathbf{r}^\top) - \mathbf{u}\|_2^2$. We establish that $\mathcal{A}$ satisfies the Restricted Isometry Property (RIP) given a sufficient number of samples. Leveraging results from non-convex low-rank matrix recovery \citep{park2016nonsquarematrixsensingspurious}, we prove that under the RIP condition, at each iteration of the EM algorithm, every local minimum of the M-step objective is near-global minimum. Using standard results for EM, we are able to show that the likelihood increases monotonically and converges to a stationary point of the likelihood function.

We conduct extensive experiments on both real-world and synthetic datasets. The experimental results demonstrate that our algorithm outperforms baseline methods, and more importantly, it remains robust in scenarios with large numbers of spammers of various types and achieves significant performance gains over other existing approaches in this case. Thus, we provide a unified framework capable of handling diverse spammer behaviors---from random clickers to malicious adversaries---offering a practical and theoretically grounded alternative for real-world crowdsourcing platforms where worker quality is heterogeneous and unknown \textit{a priori}.

\section{Related Work}
\label{sec:related_work}
Learning rewards from pairwise comparisons has been widely used across many domains such as economics, operations research and machine learning. The classical Bradley-Terry-Luce (BTL) model~\citep{bradley1952rank} assumes that each item 
is associated with an underlying reward, and pairwise comparisons between items are drawn by taking into account the reward difference between them.
Specifically, the probability that
item $w$ is preferred over item $l$ is given as
\(    P(w \succ l) = \sigma(r_{w} - r_{l}),
\)
where $\sigma(\cdot)$ denotes the sigmoid function, and $\mathbf{r} = \{r_j\}_{j=1}^K$ represents the latent reward or score associated with each of the $K$ items.
The BTL model, however, does not capture the heterogeneity in the rewards observed by different crowdworkers.

\citet{negahban2012iterative} proposed RankCentrality (\texttt{RC}), which interprets pairwise comparisons as a directed graph, where nodes represent items and directed edges represent comparison outcomes. This formulation enables the use of random-walk–based methods for ranking items.


The closest to our work is Crowd-BT model \citep{chen2013pairwise} which extends the \texttt{BTL} model by incorporating worker reliability. For a worker $s$, the probability of preferring item $i$ over $j$ is modeled as $\Pr(i \succ j \mid s) = \beta_s \, \sigma(r_i - r_j) + (1-\beta_s) \, \sigma(r_j - r_i)$, where $r_i, r_j$ are item rewards and $\beta_s$ captures the worker's competency. 
This model is effectively a mixture of two \texttt{BTL} models -- one with reward $\mathbf{r}$ and the other with negative reward i.e., $-\mathbf{r}$.
The mixture weight represents the competency of the crowdworker, with $\beta_s = 1/2$ representing a worker that labels uniformly at random.
Crowd-BT is an alternating minimization approach to jointly learn the rewards and competencies.
However, \citet{chen2013pairwise} do not provide any theoretical guarantees, and they require a gold standard dataset to initialize worker competencies, which is not always available.
On the other hand, our work adopts the Boltzmann rational choice model, which provides a more natural and theoretically grounded formulation for preference-based reward learning and alignment, and allows us to prove theoretical convergence results.

Another related work is \citet{bugakova2019aggregation}, which introduced the Factor-BT model to explicitly capture systematic biases present in crowdsourced pairwise comparisons.
Unlike prior models, which treat task-related biases as noise or ignore them altogether, \texttt{FactorBT} explicitly models these biases by incorporating known but irrelevant task features (e.g., item position on screen, background design) that may influence worker decisions. 
For a comparison between items $d_i$ and $d_j$ by worker $s$, described by a feature vector $\mathbf{x}_{sij} \in \mathbb{R}^M$, where $M$ is the feature dimension. The probability that $d_i$ is preferred over $d_j$ is given  as follows \(
\Pr(d_i \succ_s d_j) = \sigma(\gamma_s) \cdot \sigma(r_i - r_j) + (1 - \sigma(\gamma_s)) \cdot \sigma(\langle \boldsymbol{b}_s, \mathbf{x}_{sij} \rangle),
\)
where $r_i, r_j$ are the latent scores of items $d_i$ and $d_j$, $\gamma_s$ controls the susceptibility of worker $s$ to bias and $\sigma(\gamma_s)$ can be considered as the the skill of the worker same as that of $\beta_s$ in our formulation, $\boldsymbol{b}_s$ encodes the worker-specific sensitivity to task features, $\sigma(\cdot)$ is the sigmoid function. The model estimates parameters $\{r_i\}$, $\{\gamma_s\}$, and $\{\boldsymbol{b}_s\}$ by maximizing the regularized log-likelihood of the observed pairwise comparisons.
Unlike \texttt{FactorBT} \citep{bugakova2019aggregation}, we do not explicitly model the influence of task-irrelevant factors such as item position, and we instead focus on learning worker competency jointly with rewards.


\citet{jin2020rank} proposed two models that explicitly account for annotator 
competence: the Heterogeneous \emph{Bradley\--Terry\--Luce} (HBTL) model and 
the Heterogeneous \emph{Thurstone Case~V} (HTCV) model. The HBTL model is based on the Boltzmann-Rational formulation given in Equation~\eqref{eq:boltz}. In contrast, the HTCV model assumes that the probability of item $w$ being preferred over item $l$ by annotator $s$ is given by
\begin{equation}
    P(w \succ l; s) = \Phi\left(\frac{\beta_s (r_w - r_l)}{\sqrt{2}}\right),
\end{equation}

where $\Phi(\cdot)$ denotes the cumulative distribution function (CDF) of the standard normal distribution, and $\beta_s$ captures the competence of annotator $s$. They perform alternating gradient descent for both these methods and provide theoretical convergence guarantees.

Bias-Aware Ranking from Pairwise comparisons (BARP) \citep{ferrara2024bias} extends the classic BTL model to account for evaluator biases in pairwise comparisons. Each evaluator is assigned a bias parameter that distorts the latent quality scores of items based on the group membership of items. By explicitly modeling the log-likelihood over items' latent scores and evaluators' biases, \texttt{BARP} uses an alternating optimization approach to jointly estimate both. In \texttt{BARP}, the probability that evaluator $s$ prefers item $i$ over $j$ is modeled as \(
P(i \succ_s j) \;=\; \sigma\!\left( (r_i - r_j) + \theta_s \, (\delta_{ig} - \delta_{jg}) \right),
\)
where $r_i, r_j$ are the latent item scores and $\theta_s \delta_{ig}$ represents the bias of evaluator $s$ for the group $g$ of item $i$. Unlike the standard BTL model, this formulation accounts for biased perception of item scores rather than true latent scores.


\paragraph{Organization} Section~\ref{sec:problem_formulation} introduces the problem of jointly learning rewards and worker competence from pairwise comparisons. Section~\ref{sec:polya_gamma_augmentation} presents the derivation of our proposed method (\textbf{BoRaEM}) for the \textbf{Bo}ltzmann \textbf{Ra}tional Model using Expectation--Maximization (\textbf{EM}), while Section~\ref{sec:theoretical_guarantees} establishes theoretical guarantees for its convergence. Section~\ref{sec:experimental_analysis} reports experimental results and analysis. Finally, Section~\ref{sec:conclusion} concludes with a discussion of future research directions.

\section{PROBLEM FORMULATION}
\label{sec:problem_formulation}
We consider a setting where there is a set of $K$ items, and one observes outcomes of pairwise comparisons
between these items, elicited from a set of $M$ crowdworkers.
Specifically, we are given a dataset $\mathcal{D} = \{(w_i, l_i, s_i)\}_{i=1}^N$ where there are $N$ pairwise comparisons; the $i$-th comparison is performed by worker $s_i \in [M]$ where $w_i \in [K]$ is the winning item and $l_i \in [K]$ is the losing item such that $l_i$ is not equal to $w_i$.

We assume that the outcomes of these comparisons are generated according to the Boltzmann-rational model
which is an extension of the \texttt{BTL} model and has received a lot of attention recently \citep{daniels2022expertise, jeon2020reward, ziebart2010maximum}.
Similar to the \texttt{BTL} model, each item $j \in [K]$ is associated with a reward $r_j$, 
and the probability of winning a comparison is governed by pairwise reward differences.
This model also considers the heterogeneity of human decision-making. 
Specifically, humans are considered rational decision-makers, where the quality of their decisions is modulated by a ``rationality'' parameter $\beta \in [-1,1]$ \citep{daniels2022expertise, jeon2020reward, ziebart2010maximum}. 
Formally, the probability that item $w$ beats item $l$ given worker $s$ is given by
\begin{dmath}
P[w \succ l; s] 
= \frac{\exp(\beta_s r_w)}{\exp(\beta_s r_w) + \exp(\beta_s r_l)}
= \frac{1}{1 + \exp(-\beta_s (r_w - r_l))}
= \sigma(\beta_s (r_w - r_l)).
\label{eq:boltz}
\end{dmath}
The case $\beta_s = 1 $ corresponds to a perfect annotator who makes decisions according to the underlying \texttt{BTL} model, $\beta_s  = -1$ for an ``adversarial'' worker, and  $\beta_s = 0$ for a ``spammer'' who labels uniformly at random. 


Given a dataset of pairwise comparisons $\mathcal{D} = \{(w_i, l_i, s_i)\}_{i=1}^N$ that is drawn according to the 
Boltzmann-rational model with reward vector $\mathbf{r} = \{r_j\}_{j=1}^K$ and worker competency vector $\boldsymbol{\beta} = \{\beta_s\}_{s=1}^M$, 
the goal is to jointly estimate $\mathbf{r}$ and $\boldsymbol{\beta}$.
Since the reward model is invariant to both additive shifts and scaling, as the likelihood depends only on the product $\beta_s (r_w - r_l)$, the transformations $r \rightarrow r + c$ and $r \rightarrow c r$, $\beta_s \rightarrow \beta_s / c$ leave the likelihood unchanged, resulting in non-identifiability.
To address this, we enforce identifiability by centering the rewards to remove translation invariance and normalizing them to unit root-mean-square after each EM iteration, with a compensatory rescaling of worker competencies. This fixes the latent scale and selects a unique representative solution. Additionally, mild $\ell_2$ regularization on rewards and a Gaussian prior on competencies improve numerical stability and prevent degenerate scaling. More details about implementation are provided in Appendix \ref{app:pgem_implementation}.

\section{BoRaEM Algorithm}
\label{sec:polya_gamma_augmentation}

In order to solve this problem, one can formulate a log-likelihood objective based on the underlying model.
Specifically, following Equation~\eqref{eq:boltz} for the probability that a worker $s_i$ favors $w_i$ over $l_i$:
\begin{equation}
    p(w_i \succ l_i \mid s_i) =  \sigma(\beta_{s_i}(r_{w_i} - r_{l_i}))
\label{eq:extended_btl}
\end{equation}
Let $\bm{\theta} = \mathbf{r}, \boldsymbol{\beta}$.
The associated complete-data likelihood is
\begin{equation}
\mathcal{L_{\text{complete}}}(\bm{\theta}; \mathcal{D}) = \prod_{({w_i}, {l_i}, s_i) \in \mathcal{D}} \sigma\left(\beta_{s_i}(r_{w_i} - r_{l_i})\right)
\label{eq:complete_data_log_likelihood}
\end{equation}
where $\mathcal{D}$ refers to the comparisons data.

However, direct maximum likelihood estimation (MLE) of the parameter vectors $\mathbf{r}$ and $\boldsymbol{\beta}$ is challenging as the objective is non-concave.
This complicates both optimization and theoretical analysis.


To circumvent this issue, we adopt the \textbf{Pólya-Gamma (PG) augmentation 
framework} \citep{polson2013bayesian}\footnote{We encourage readers to also refer to a useful blog on Pólya-Gamma (PG) augmentation at \url{https://gregorygundersen.com/blog/2019/09/20/polya-gamma/}.}. For a latent variable $\omega \sim \text{PG}(b, 0)$ 
where $b > 0$, and any $\psi \in \mathbb{R}$, the logistic likelihood admits the 
integral representation:
\begin{equation}
\label{eq:pg_identity}
    \frac{(e^{\psi})^{a}}{(1+e^{\psi})^{b}}= \frac{e^{\kappa \psi}}{2^b} \int_0^{\infty} \exp\left(-\frac{\omega \psi^2}{2}\right) p(\omega \mid b, 0) \, d\omega,
\end{equation}
with $\kappa = a - b/2$, where $p(\cdot)$ is the density of $\omega$.



We apply this identity to our model by setting $a=1$, $b=1$, and $\psi = \eta_i$ where $\eta_i := \beta_{s_i}(r_{w_i} - r_{l_i})$. 

Thus, the probability of the $i^{th}$ pairwise comparison can be written as:
\begin{align}
p(w_i, l_i, s_i | \bm{\theta}) &= \sigma(\eta_i) \notag\\
&= \frac{1}{2} e^{\eta_i / 2} \int_0^{\infty}\!\exp\left(-\frac{\omega_i \eta_i^2}{2}\right) p(\omega_i)  d\omega_i,
\label{eq:our_model_pg}
\end{align}
where $\omega_i \sim \text{PG}(1, 0)$ is a latent Pólya-Gamma variable introduced for each $i^{\text{th}}$ comparison.

So,
the joint density $p(w_i, l_i, s_i, \omega_i| \bm{\theta})$ is given by

\begin{equation}
p(w_i, l_i, s_i, \omega_i| \bm{\theta}) \propto \exp\left(-\frac{\omega_i \eta_i^2}{2} + \frac{\eta_i}{2}\right) p(\omega_i)
\label{eq:joint_density}
\end{equation}

which takes an exponential form.

\paragraph{Expectation Maximization. }

Using Equation~\eqref{eq:joint_density}, and given all $\omega_{i}$, the log-likelihood objective over $\boldsymbol{\theta} \;=\;\bigl(\{\,r_j\}_{j=1}^K,\;\{\,\beta_l\}_{l=1}^M\bigr),$  simplifies to the following form upto additive constant that do not depend on $\bm{\theta}$ (more details in Appendix~\ref{sec:ll_detailed}).
\begin{align}
\ell(\boldsymbol{\theta}) 
&= \sum_{(w_i, l_i, s_i) \in \mathcal{D}}
\Bigl[ \tfrac{1}{2}\,\beta_{s_i}\,(r_{w_i} - r_{l_i}) \Bigr. \notag\\
&\quad \Bigl. - \tfrac{1}{2}\,\omega_i\,\beta_{s_i}^2\,(r_{w_i} - r_{l_i})^2 \Bigr]
\end{align}

Hence, this gives an  EM algorithm that iteratively estimates the latent variables $\omega_i$ followed by log-likelihood maximization steps. 
Algorithm \ref{alg:em_am} describes this EM Algorithm. The Expectation step (E-step) and Maximization step (M-step) are as follows (details in Appendix \ref{sec:em_detailed}):
\subsection{E-step}
At iteration \(t\), given the current estimates $\boldsymbol{\theta}^{(t)}$ we have 
\begin{equation}
\eta_i^{(t)} 
\;=\;
\beta_{\,s_i}^{(t)}\,\bigl(r_{\,w_i}^{(t)} - r_{\,l_i}^{(t)}\bigr),
\quad \forall i \in [N]
\label{eq:eta_i}
\end{equation}
Given $\eta_i^{(t)}$, the Pólya–Gamma random variable
$\omega_i$ follows the distribution
\(\mathrm{PG}(1,\;\eta_i^{(t)})\).
Hence, we can evaluate the expectation of $\omega_i$ given the current parameters. Let $\kappa_i^{(t)} = \mathbf{E}[\omega_i ~ |~\eta_i^{(t)} ]$. We show that 
(Appendix \ref{subsec:e_step_detailed})
\begin{equation}
\kappa_i^{(t)} 
= 
\begin{cases}
\frac{1}{2\,\eta_i^{(t)}} \,\tanh\left(\frac{\eta_i^{(t)}}{2}\right),
& \eta_i^{(t)} \neq 0 \\[8pt]
\frac{1}{4},
& \eta_i^{(t)} = 0
\end{cases}
\label{eq:kappas}
\end{equation}


This gives the formulation for the $Q$ function:
\begin{align}
Q\bigl(\boldsymbol{\theta} \mid \boldsymbol{\theta}^{(t)}\bigr)
&= \sum_{(w_i, l_i, s_i) \in \mathcal{D}}
\Bigl[ \tfrac{1}{2}\,\beta_{s_i}\,(r_{w_i} - r_{l_i}) \Bigr. \notag\\
&\quad \Bigl. - \tfrac{1}{2}\,\kappa_i^{(t)}\,\beta_{s_i}^2\,(r_{w_i} - r_{l_i})^2 \Bigr]
\label{eq:Qfunc_defn}
\end{align}



\begin{algorithm}
\caption{BoRaEM Algorithm}
\label{alg:em_am}
\begin{algorithmic}[1]
    \Require Dataset of pairwise comparisons $\mathcal{D}$
    \Ensure Parameter estimates $\boldsymbol{\theta} = (\boldsymbol{\beta}, \boldsymbol{r})$

    \State Initialize $\boldsymbol{\theta}^{(0)}$
    \State $t \gets 0$
    \Repeat
        \State \textbf{E-step:} 
        \State \quad $Q(\boldsymbol{\theta} \mid \boldsymbol{\theta}^{(t)}) \gets 
        \mathbb{E}_{\omega \mid \mathcal{D}, \boldsymbol{\theta}^{(t)}} [ \ell(\boldsymbol{\theta}; \mathcal{D}, \omega) ]$ \Comment{See \eqref{eq:Qfunc_defn}}
        
        \State \textbf{M-step (using AM):} 
        \State \quad Initialize $(\boldsymbol{\beta}^{(0)}, \boldsymbol{r}^{(0)}) \gets \theta^{(t)}$, $k \leftarrow 0$
        \quad \Repeat
            \quad\State $\boldsymbol{\beta}^{(k+1)} \gets \arg\max_{\boldsymbol{\beta}} Q(\boldsymbol{\beta}, \boldsymbol{r}^{(k)}\mid \boldsymbol{\theta}^{(t)})$
            \label{state:beta_update}
            \quad\State $\boldsymbol{r}^{(k+1)} \gets \arg\max_{\boldsymbol{r}} Q(\boldsymbol{\beta}^{(k+1)}, \boldsymbol{r} \mid \boldsymbol{\theta}^{(t)})$
            \label{state:r_u1}
            \quad\State  $\boldsymbol{r}^{(k+1)} \gets  \boldsymbol{r}^{(k+1)}  - \frac{\boldsymbol{1}^T\boldsymbol{r}^{(k+1)}}{n}$
            \label{state:r_u2}
            \State $k \gets k+1$
        \quad\Until convergence
        \State $\boldsymbol{\theta}^{(t+1)} \gets (\boldsymbol{\beta}^{(k-1)}, \boldsymbol{r}^{(k-1)})$
        \State $t \gets t + 1$
    \Until convergence
    \State \Return $\boldsymbol{\theta}^{(t+1)}$
    
\end{algorithmic}
\end{algorithm}

\subsection{M-step}
\label{subsec:m_step}
In order to maximize the $Q\bigl(\boldsymbol{\theta} \mid \boldsymbol{\theta}^{(t)}\bigr)$ function in terms of $\boldsymbol{\theta} =  (\boldsymbol{\beta}, \boldsymbol{r})$, we use the Alternating Maximization (AM) algorithm. 
We first initialize $\boldsymbol{\theta}$ to $\boldsymbol{\theta}^{(t)}$. We update $\boldsymbol{\beta}$ corresponding to Step \ref{state:beta_update} of Algorithm \ref{alg:em_am} as follows (Appendix \ref{subsec:m_step_detailed}) (Let $d_i = r_{w_i} - r_{l_i}$ denote the reward difference between the preferred item $w_i$ and the losing item $l_i$, evaluated at the current item reward estimates within the inner iterations of the M-step.)

\begin{equation}
\beta_s 
=\;
\frac{\tfrac1{2}\sum_{\,i\in I_s} d_i}{\sum_{\,i\in I_s} \kappa_i^{(t)}\,d_i^2}
\;=\;
\frac{\sum_{\,i\in I_s} d_i}{2 \,\sum_{\,i\in I_s} \kappa_i^{(t)}\,d_i^2}
\label{eq:betak_update}
\end{equation}


where
\(
I_s \;=\;\bigl\{\,i \in [N]: s_i = s\bigr\}
\)
as the index set of comparisons for worker \(s\). 

The reward vector $\mathbf{r}$ can be obtained uniquely by solving the linear system \(
\mathbf{H} \mathbf{r} = \mathbf{b},\) and imposing the constraint $\sum_{j=1}^{K} r_j = 0$,
where the matrix $\mathbf{H}$ is defined as:
\begin{align*}
H_{jj} &= \sum_{i \in W_j} \beta_{s_i}^2 \kappa_i^{(t)} + \sum_{i \in L_j} \beta_{s_i}^2 \kappa_i^{(t)},\\
H_{jl} &= \sum_{i \in \mathcal{D}_{jl}}
- \beta_{s_i}^2 \kappa_i^{(t)}, \quad j \neq l,
\end{align*}
and, the vector $\mathbf{b}$ is defined as:
\[
b_j = \sum_{i \in W_j} \frac{1}{2} \beta_{s_i} + \sum_{i \in L_j} \left(-\frac{1}{2} \beta_{s_i}\right),
\]
where $W_j$ and $L_j$ denote the sets of comparisons in which item $j$ won and lost, respectively, and $\mathcal{D}_{jl}$ is the set of comparisons between items $j$ and $l$. This corresponds to Steps \ref{state:r_u1} and \ref{state:r_u2} of Algorithm \ref{alg:em_am}. The linear system has a unique solution (Appendix \ref{subsec:m_step_detailed}) and can be solved efficiently using conjugate gradient descent. The implementation details are provided in Appendix \ref{app:pgem_implementation}.
\section{Convergence Analysis}
\label{sec:theoretical_guarantees}
In this section we show that the M-step objective can be written as a matrix sensing problem with a rank-1 parameter matrix $X = \betav \rv^\top$. This allows us to leverage standard results from low-rank matrix recovery to analyze the geometry of the Q-function and the behavior of EM.


Formally, Equation~\eqref{eq:Qfunc_defn} can be stated in the following form.
\[
\min_{\betav, \rv}
\|\mathcal{A}(\boldsymbol{\beta}\rv^T) - \mathbf{u}\|_2^2
    \,,
\]
where $\mathbf{u} \in \mathbb{R}^N$ and the linear operator $\opA \colon \reals^{M\times K} \to \reals^N$ as $\opA(X) = (\langle A_1, X\rangle, \dots, \langle A_N, X\rangle)^\top$ with matrices $A_i = 2c_i \alpha \ \bm{e_{s_i}} (\bm{e_{w_i}} - \bm{e_{l_i}})^\top$ 
where $\alpha = \sqrt{MK/2N}$,  $c_i^2 = \kappa_i^{(t)}$ for iteration t, $e_x$ denotes a unit vector having value as $1$ at index $x$. More details are in Appendix \ref{sec:convergence_analysis}.

We assume that the worker index and the two item indices (denoted by random variables $B, W,$ and $L$, respectively) are independent and uniformly distributed, $B \sim \text{Unif}\{ 1,2,\ldots,M\}$ and $W,L \sim \text{Unif}\{1,2,\ldots,K\}$. The comparison outcome is then generated according to \ref{eq:boltz}. Further, we assume that $\beta \in [-1,1]$ and $r \in [-R,R]$ for some constant $R>0$.


Under mild assumptions detailed in Appendix~\ref{sec:convergence_analysis}, the linear operator $\mathcal{A}$ satisfies the Restricted Isometry Property, according to the following definition \citep{candes2010tight}.

\begin{definition}[Restricted Isometry Property (RIP)]
A linear operator $\mathcal{A} : \mathbb{R}^{M \times K} \rightarrow \mathbb{R}^{N}$ with $(\mathcal{A}(X))_i = \langle A_i, X\rangle$ satisfies the restricted isometry property on rank-$r$ matrices, with parameter $\delta_r$, if the following set of inequalities hold for all rank-$r$ matrices $X$:
\[
(1-\delta_r)\|X\|_F^2 \le \|\mathcal{A}(X)\|_2^2 \le (1+\delta_r) \|X\|_F^2.
\]
\end{definition}
This gives us the following lemma.
\begin{restatable}{lemma}{rip}
\label{thm:rip_main}
Define matrix operators $A_i \in \reals^{M \times K}$ as $A_i = 2c_i \alpha \ \bm{e_{s_i}} (\bm{e_{w_i}} - \bm{e_{l_i}})^\top$, where $\alpha = \sqrt{MK/2N}$. Let $X \in \reals^{M \times K}$ be any matrix of arbitrary rank $r$ with $X\bm{1} = \bm{0}$, where $\bm{1}$ and $\bm{0}$ denote the all-ones and all-zeros vectors, respectively. Define the linear operator $\opA \colon \reals^{M\times K} \to \reals^N$ as $\opA(X) = (\langle A_1, X\rangle, \dots, \langle A_N, X\rangle)^\top$. Then, for any $\eta > 0$, there exists $\mu > 0$ and $\delta>0$ such that with probability at least $1-\eta$:
\[
(1-\mu)\norm{X}{F}^2 \le \|\opA(X)\|_2^2 \le (1+\mu) \norm{X}{F}^2 \, ,
\]
provided that 
$$N\geq C\;\frac{MK}{\delta^2}\ln\left(\frac{2}{\eta}\right),$$ where $C > 0$ is some constant.
\end{restatable}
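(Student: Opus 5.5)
We write $\|\opA(X)\|_2^2=\sum_{i=1}^N Z_i$ with
\[
Z_i := \langle A_i,X\rangle^2 = 4c_i^2\alpha^2\,(X_{s_i w_i}-X_{s_i l_i})^2 = \frac{2MK}{N}\,\kappa_i^{(t)}\,(X_{s_i w_i}-X_{s_i l_i})^2 ,
\]
and argue by concentration of this sum around its mean.

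\textbf{Step 1 (the mean).} Treat the triples $(s_i,w_i,l_i)$ as i.i.d.\ with $B,W,L$ independent and uniform. First take $\kappa_i^{(t)}$ as a fixed weight. Because $\beta\in[-1,1]$ and $r\in[-R,R]$, we have $|\eta_i^{(t)}|\le 2R$. The function $\tanh(x/2)/(2x)$ is decreasing in $|x|$, so $\kappa_i^{(t)}\in[\kappa_{\min},1/4]$ with $\kappa_{\min}=\tanh(R)/(4R)>0$.

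Next compute, for each row $s$,
\[
\E_{W,L}\bigl(X_{sW}-X_{sL}\bigr)^2 = \frac{2}{K}\sum_j X_{sj}^2-\frac{2}{K^2}\Bigl(\sum_j X_{sj}\Bigr)^2 .
\]
The hypothesis $X\bm 1=\bm 0$ kills the second term, leaving $\frac{2}{K}\|X_{s\cdot}\|_2^2$. Averaging over $B$ then gives $\E(X_{BW}-X_{BL})^2=\frac{2}{MK}\|X\|_F^2$. Hence, without weights, $\E\|\opA(X)\|_2^2=4\bar\kappa\,\|X\|_F^2$, where $\bar\kappa$ is the effective average weight.

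The weights therefore enter only as a multiplicative factor lying in $[4\kappa_{\min},1]$. This is why the statement says ``there exists $\mu$'' rather than giving a small $\delta$: I will take $\mu$ large enough to absorb both the $\kappa$-distortion and the fluctuation term $\delta$. Concretely, $1-\mu\le 4\kappa_{\min}(1-\delta)$ and $1+\mu\ge 1+\delta$. A cleaner alternative is to condition on the realized weights and bound the mean between $4\kappa_{\min}\|X\|_F^2$ and $\|X\|_F^2$.

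\textbf{Step 2 (concentration).} Normalize so that $\|X\|_F=1$; the inequality is homogeneous, so this loses nothing. Each summand is bounded:
\[
0\le Z_i\le \frac{2MK}{N}\cdot\frac14\cdot 4\max_{s,j}X_{sj}^2\le \frac{2MK}{N}.
\]
Hoeffding's inequality then gives
\[
\Pr\Bigl(\bigl|\textstyle\sum_i Z_i-\E\textstyle\sum_i Z_i\bigr|\ge\delta\Bigr)\le 2\exp\!\bigl(-N\delta^2/(2(MK)^2)\bigr).
\]
This requires $N\gtrsim (MK)^2\delta^{-2}\ln(2/\eta)$, which is a factor $MK$ worse than claimed. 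To recover the stated $N\ge C\,MK\,\delta^{-2}\ln(2/\eta)$, I would use Bernstein's inequality instead, with the variance bound
\[
\operatorname{Var}(Z_i)\le \E Z_i^2\le \frac{2MK}{N}\max_i Z_i\cdot \E Z_i\lesssim \frac{MK}{N^2}.
\]
This uses $\E(X_{BW}-X_{BL})^4\le 4\max X^2\cdot\E(\cdot)^2$. The range term stays at $MK/N$, so with $\delta\le 1$ Bernstein yields $\exp(-cN\delta^2/(MK))$ and hence the claimed sample size.

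\textbf{Main obstacle.} The lemma is stated for a fixed $X$, and I would prove exactly that; a uniform-over-rank-$r$ RIP would require an $\epsilon$-net and extra $(M+K)r$ log factors. The delicate point is that the bound $\max_{s,j}X_{sj}^2\le\|X\|_F^2$ is tight only for spiky $X$, yet it is the bound used for both the range and the variance terms in Bernstein. I would verify that this crude bound is genuinely what the $MK$ sample-size scaling requires, rather than an incoherence assumption.

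A second subtlety is that $\kappa_i^{(t)}$ depends on the data through $\theta^{(t)}$. I would handle this by stating the bound conditionally on $\theta^{(t)}$, or, as in the appendix's ``mild assumptions'', by treating the weights as fixed and bounded in $[\kappa_{\min},1/4]$.
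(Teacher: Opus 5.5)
Your proposal is correct and follows essentially the same route as the paper: compute the mean using $X\bm{1}=\bm{0}$ to get $\E\langle A_i,X\rangle^2 = 4c_i^2\norm{X}{F}^2/N$, bound the weights by $c_i^2\in[\zeta,1/4]$ and absorb them into $\mu=\epsilon+\delta$ with $\epsilon=1-4\zeta$, then apply Bernstein with range $O(\alpha^2\norm{X}{F}^2)$ and variance controlled via $\E Y_i^2\le \max Y_i\cdot\E Y_i$, all for a single fixed $X$ (the paper also proves only this pointwise bound, with no net argument). Two small fixes: in your variance display the prefactor should be $\frac{2MK}{N}\max_{s,j}X_{sj}^2$ rather than $\frac{2MK}{N}\max_i Z_i$; and since your deviation bound is additive, the clean choice is $1-\mu = 4\kappa_{\min}-\delta$, i.e.\ $\mu=(1-4\kappa_{\min})+\delta$, exactly as in the paper.
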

Then, invoking Remark~3 from \cite{park2016nonsquarematrixsensingspurious}, we can state the following theorem.
\begin{restatable}{theorem}{convergence}
\label{thm:convergence}
    Let $\{ \betav^{(t)}, \rv^{(t)} \}_{t \geq 0}$ be the sequence of iterates generated by the EM algorithm. Suppose the M-step at each iteration $t$ solves the optimization problem:
\[
(\betav^{(t+1)}, \rv^{(t+1)}) = \arg \min_{\betav,\rv} \|\mathcal{A}(\betav \rv^\top) - \mathbf{u}^{(t)}\|_2^2
\]
having global minimum $(\bm{\theta}^*)^{{(t+1)}}$, where $u_i^{(t)} = \frac{\alpha}{c_i}$. If $R = 0.1$, the operator $\mathcal{A}$ satisfies Restricted Isometry Property with $\mu_2 = \mu_{4} = 0.005$, and every local minimum $\bm{\theta}^{(t+1)} = \left(\betav^{(t+1)}; \rv^{(t+1)}\right)$ satisfies:
\[
\text{DIST}(\bm{\theta}^{(t+1)}, (\bm{\theta}^*)^{{(t+1)}}) \leq \frac{1250}{3\sigma_1(X^*_t)} \cdot \|\mathcal{A}(X^*_t - X_{1,t}^*)\|_2
\]
where $X^*_t \bm{1} = \bm{0}$ and $X_{1,t}^*$ is the best rank-1 approximation of the true high-rank matrix $X^*_t$.
\end{restatable}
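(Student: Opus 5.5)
The plan is to derive the theorem directly from Lemma~\ref{thm:rip_main} together with the robustness result (Remark~3) of \cite{park2016nonsquarematrixsensingspurious} for non-square rank-constrained matrix sensing. That result considers $\min_{U,V}\|\mathcal{A}(UV^\top)-\mathbf{u}\|_2^2$ (possibly with a balancing regularizer) where $\mathbf{u}=\mathcal{A}(X^*)$ for a matrix $X^*$ that need not be low rank. If $\mathcal{A}$ satisfies RIP with constants $\delta_{2r},\delta_{4r}$ below a threshold, every local minimum $(U,V)$ lies within distance $\frac{c}{\sigma_r(X^*)}\|\mathcal{A}(X^*-X^*_r)\|_2$ of the global optimum, measured in the factor metric $\text{DIST}$. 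The constant $c$ is an explicit function of the RIP constants.

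\textbf{Step 1: rewrite the M-step as exact-model matrix sensing.} Expand $\|\mathcal{A}(\betav\rv^\top)-\mathbf{u}^{(t)}\|_2^2$ with $\langle A_i,\betav\rv^\top\rangle = 2c_i\alpha\,\beta_{s_i}(r_{w_i}-r_{l_i})$ and $u_i^{(t)}=\alpha/c_i$. The quadratic term gives $4\alpha^2\kappa_i^{(t)}\eta_i^2$ and the cross term gives $-4\alpha^2\eta_i$, plus a constant. The objective therefore equals $-8\alpha^2 Q(\bm\theta\mid\bm\theta^{(t)})$ up to an additive constant, so minimizers and local minima coincide with those of the M-step.

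Next, write $\mathbf{u}^{(t)}=\mathcal{A}(X_t^*)$ for some matrix $X_t^*$ with $X_t^*\bm 1=\bm 0$. This is possible because each $A_i$ already satisfies $A_i\bm 1=\bm 0$, so one can take $X_t^*$ as the minimum-norm preimage within the subspace $\{X: X\bm 1=\bm 0\}$, which is the subspace where Lemma~\ref{thm:rip_main} applies. Since the centering step keeps $\rv^\top\bm 1=0$, all iterates $\betav\rv^\top$ also lie in this subspace, so RIP is available for every difference $X-X_t^*$ that appears in the landscape argument.

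\textbf{Step 2: obtain the RIP constants.} Apply Lemma~\ref{thm:rip_main} with $\mu=0.005$, choosing $\delta$ and $N$ accordingly. Because the lemma holds for arbitrary rank on the centered subspace, it gives $\mu_2=\mu_4=0.005$ simultaneously.

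\textbf{Step 3: evaluate the constant.} Substitute $\delta_2=\delta_4=0.005$ and $r=1$ into the explicit constant of Remark~3 of \cite{park2016nonsquarematrixsensingspurious}. The bound $R=0.1$, combined with $|\beta_s|\le 1$, controls the scale of the factors, keeps the balancing-regularizer term bounded, and ensures that the factor norms in the constant are bounded. Evaluating then gives $\frac{1250}{3\sigma_1(X_t^*)}$, and the residual term becomes $\|\mathcal{A}(X_t^*-X_{1,t}^*)\|_2$.

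\textbf{Main obstacle.} The hardest part is matching hypotheses precisely. Park et al.\ state their results for a regularized objective with a $\|U^\top U-V^\top V\|_F^2$ balancing term and for RIP over all rank-$4r$ matrices. Our RIP holds only on the centered subspace, and the weights $c_i$ are data-dependent through $\kappa_i^{(t)}$. To handle this, I would restrict their argument to the subspace by noting that every direction used in their proof is a difference of centered rank-1 matrices. I would also argue that the rescaling invariance $\betav\to\betav/c$, $\rv\to c\rv$ lets us assume balanced factors without loss of generality, so the regularizer vanishes at the minima considered.

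Finally, the weights satisfy $\kappa_i^{(t)}\in[\tanh(R')/(4R'),1/4]$ with $R'=R$ bounded, since $|\eta_i|\le 2R$. This bounds the $c_i$ uniformly, which is what allows Lemma~\ref{thm:rip_main} to hold with a fixed constant $C$ across iterations.
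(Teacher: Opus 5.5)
\textbf{Existence of $X_t^*$ is not established.} In Step~1 you write $\mathbf{u}^{(t)}=\opA(X_t^*)$ and justify it by $A_i\bm 1=\bm 0$. That fact only shows that a preimage, \emph{if one exists}, can be replaced by its row-centered version. It says nothing about whether $\mathbf{u}^{(t)}$ lies in the range of $\opA$.

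An exact preimage means solving $X_{s_i,w_i}-X_{s_i,l_i}=1/(2c_i^2)$ for all $i$, i.e.\ assigning potentials on each worker's comparison graph. This fails whenever a worker's comparisons contain a cycle with nonzero signed sum. For example, a pair compared twice with opposite outcomes, or $a\succ b\succ c\succ a$, forces a sum of strictly positive numbers $1/(2c_i^2)$ to vanish.

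This is not a corner case. Since $\opA(X)=\opA\bigl(X(I-\tfrac1K\bm 1\bm 1^\top)\bigr)$, $\opA$ has rank at most $M(K-1)$. Step~2 uses $N$ of order $MK/\delta^2$ with $\delta<0.005$. In that regime $\opA$ is far from onto $\reals^N$, and such cycles are forced by pigeonhole.

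The paper closes this step with its standing assumption that every worker graph $G_s$ is acyclic. That makes the system solvable tree by tree, followed by row-centering.

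You can also repair your own argument without that assumption. Set $X_t^*=\opA^\dagger\mathbf{u}^{(t)}$. It lies in $\mathrm{span}\{A_i\}$ and so satisfies $X_t^*\bm 1=\bm 0$; this is where your observation genuinely helps. With $\Pi$ the orthogonal projector onto $\mathrm{range}(\opA)$,
\[
\|\opA(\betav\rv^\top)-\mathbf{u}^{(t)}\|_2^2=\|\opA(\betav\rv^\top-X_t^*)\|_2^2+\|(I-\Pi)\mathbf{u}^{(t)}\|_2^2 .
\]
So the landscape is exactly that of sensing $X_t^*$, and Remark~3 of Park et al.\ applies. This route is arguably better than the paper's. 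Acyclicity limits each worker to $K-1$ comparisons, i.e.\ $N\le M(K-1)$, which is hard to reconcile with the sample size the RIP lemma demands.

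\textbf{The role of $R=0.1$ is misplaced.} In Step~2 you obtain $\mu=0.005$ by ``choosing $\delta$ and $N$ accordingly.'' But in the lemma $\mu=\epsilon+\delta$ with $\epsilon=1-4\zeta=1-\tanh(R)/R$, and no sample size can lower that floor. The hypothesis $R=0.1$ is precisely what gives $\zeta>0.249$ and $\epsilon<0.004$. That leaves room for $\delta$ so that $\mu_2=\mu_4<0.005$ \emph{strictly}, as Remark~3 requires.

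You do note $\kappa_i^{(t)}\in[\tanh(R)/(4R),1/4]$, but you use it only for uniformity of $C$. What matters is that the relative spread of these bounds, $1-\tanh(R)/R$, is below $0.005$.

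Conversely, Step~3 gives $R$ a role it does not have. The constant $\frac{1250}{3\sigma_1(X_t^*)}$ is read off directly from Remark~3 once $\delta_2\le\delta_4<0.005$. It does not depend on $R$ or on factor norms.

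Your treatment of balancing and of the centered subspace goes beyond the paper, which invokes Remark~3 without comment. The clean justification for balancing is this: a balanced rescaling of a local minimum of the unregularized objective is a local minimum of the regularized one, because the regularizer is nonnegative and vanishes there.
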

Therefore, each EM update computes a solution 
whose distance to the global optimum of the corresponding M-step objective is bounded by the intrinsic rank-one approximation error of the corresponding M-step objective.
Thus, the estimation error at each iteration is controlled by $\|\mathcal{A}(X^*_t - X_{1,t}^*)\|_2$. At iteration $t$, if $X_t^*$ is exactly rank-1, the algorithm recovers the global minimum of the function $Q(\bm{\theta} \mid \bm{\theta}^t)$.

Following Proposition 2.7.1 from \citet{Bertsekas01031997}, we can say the alternating minimization (AM) procedure used to solve the M-step converges to a stationary point, since our $Q(\bm{\theta} \mid \bm{\theta}^t)$ function is differentiable and marginally convex. 
In our case, in each iteration of AM, we solve for $\bm{\beta}$ and $\mathbf{r}$ exactly. Therefore, Theorem~4.3 from \citet{park2016nonsquarematrixsensingspurious} implies that alternating minimization escapes saddle points and converges to a local minimum with high probability, which, by Theorem~\ref{thm:convergence}, is close to the global minimum.


Finally, since the M-step at each iteration is solved using AM, which is a monotonic procedure \citep{prateekjain}, each EM iteration monotonically increases the likelihood. Standard convergence results for EM therefore imply that the sequence of iterates converges to a stationary point of the likelihood.


%



\section{EXPERIMENTAL ANALYSIS}
\label{sec:experimental_analysis}

\subsection{Evaluation Metrics}
\label{sec:eval_metrics}
To quantitatively evaluate the performance of the proposed methods, we employ Kendall's Tau ($\tau$), Accuracy ($acc$), and Weighted Accuracy ($wacc$). These metrics focus on the relative ordering of items rather than absolute value errors. 
Kendall's Tau is a standardized rank correlation that is robust to outliers, but it treats all pairwise swaps equally. Accuracy measures the probability of correctly ordering item pairs and is easy to interpret, though it ignores the magnitude of score differences. Weighted Accuracy penalizes larger ranking errors more strongly, but its value depends on the distribution of the scores.
The definitions, including their respective advantages and limitations, are summarized in Table~\ref{tab:metrics} in Appendix \ref{app:eval_metrics}. we evaluate all the representative crowdsourcing methods discussed in Section~\ref{sec:related_work}. More details are given in \ref{app:code_methods}.



\begin{figure*}[h]
    \centering
        \begin{subfigure}[t]{0.8\linewidth}
        \centering
        \includegraphics[width=\linewidth]{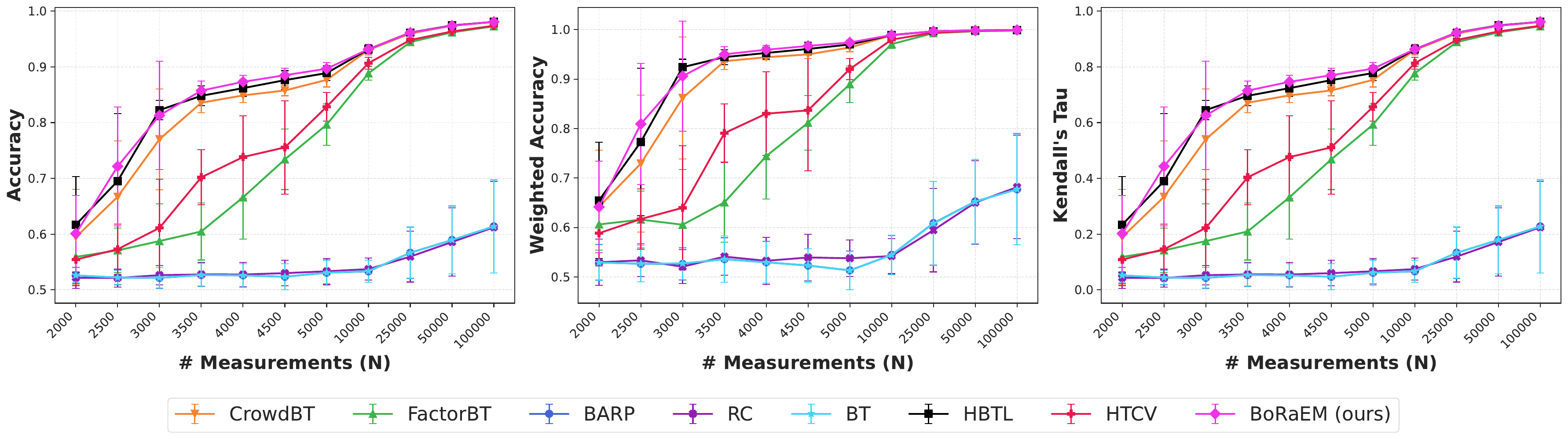}
        \subcaption{Varying \(N\). \(K=100,\,M=500\).}
        \label{fig:sub_vary_N_-1_1}
    \end{subfigure}\hfill
    \begin{subfigure}[t]{0.8\linewidth}
        \centering
        \includegraphics[width=\linewidth]{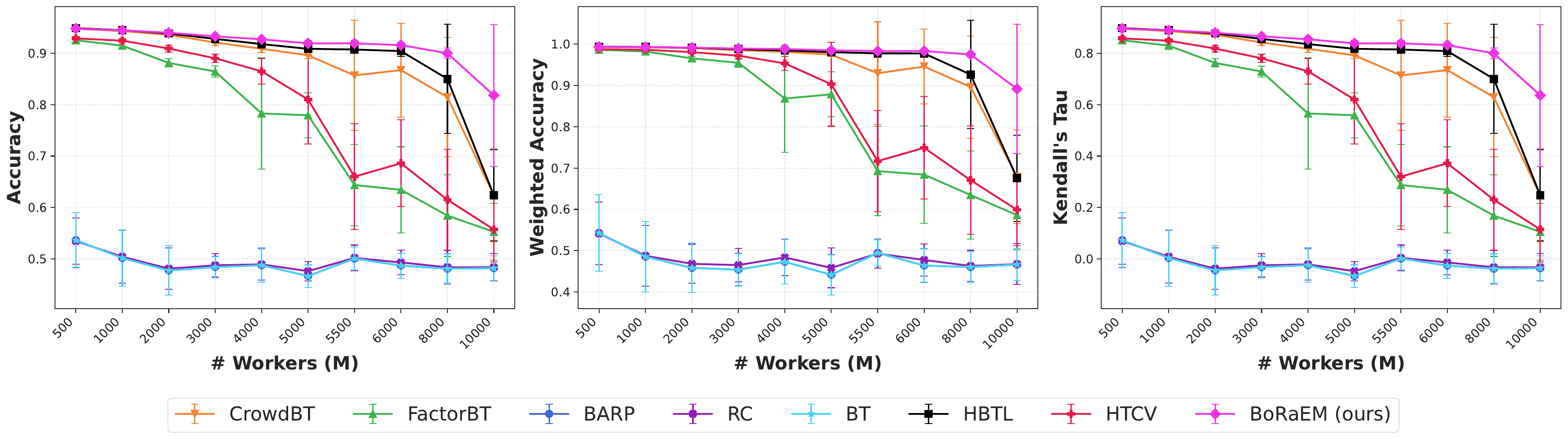}
        \subcaption{Varying \(M\). \(K=200,\,N=25000\).}
        \label{fig:sub_vary_M_-1_1}
    \end{subfigure}\hfill
    \begin{subfigure}[t]{0.8\linewidth}
        \centering
        \includegraphics[width=\linewidth]{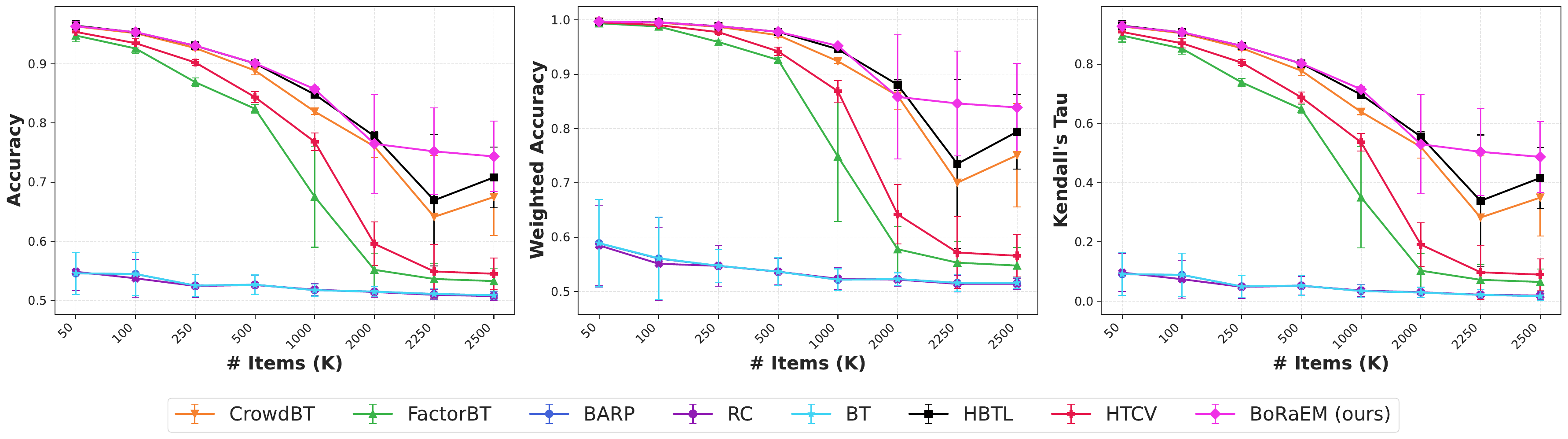}
        \subcaption{Varying \(K\). \(M=20000,\,N=1000\).}
        \label{fig:sub_vary_K_-1_1}
    \end{subfigure}
    \caption{Varying \(N\), \(K\), \(M\) individually when \(\beta \in [-1,1]\).}
    \label{fig:vary_all_-1_1}
\end{figure*}

\subsection{Synthetic Datasets}
\label{sec:synthetic_dataset_exp}

We will sample data (having $N$ measurements) from the distribution given in Eq.~\eqref{eq:extended_btl}, with $K$ (number of items) and $M$ (number of workers) to study their effect and we sample $\beta$ uniformly from $[-1, 1]$. 
We use 10 random seeds for data sampling.
Let $k$ denote 1000 for specifying number of measurements, workers, or items.
To study the effect of each variable, we vary $N$, $K$, $M$ individually keeping the other two variables constant as follows:




\begin{enumerate}
    \item \textbf{Varying N, Keeping K and M constant:}
We conduct an experiment with fixed values of the number of items $K = 100$ and the number of workers $M = 500$, while varying the number of measurements $N \in \{2k, \allowbreak 2.5k, \allowbreak 3k, \allowbreak 3.5k, \allowbreak 4k, \allowbreak 4.5k, \allowbreak 5k, \allowbreak 10k, \allowbreak 25k, \allowbreak 50k, \allowbreak 100k\}$ to examine the effect of measurement volume on various techniques.
As seen in Figure \ref{fig:sub_vary_N_-1_1}, irrespective of number of measurements the performance of methods not modeling the annotator competence remains low, whereas the methods that model annotator competence while predicting rewards i.e., BoRaEM, HBTL and CrowdBT remains high. BoRaEM seems to be robust than CrowdBT in sparse measurements setting (i.e, for measurements less than 4000).

\item \textbf{Varying M, Keeping N and K constant:}
We vary the number of workers $M \in \{500, \allowbreak 1k, \allowbreak 2k, \allowbreak 3k, \allowbreak 4k, \allowbreak 5k, \allowbreak 5.5k, \allowbreak 6k, \allowbreak 8k\}$ while fixing the number of items at $K = 200$ and the number of measurements at $N = 25k$, to analyze the impact of worker scale.
As seen in Figure \ref{fig:sub_vary_M_-1_1}, all methods suffers as the number of workers are increased (or the comparisons per worker get decreased). The performance decay of CrowdBT is seen to be much higher than BoRaEM and BoRaEM is quite stable.

\item \textbf{Varying K, Keeping N and M constant:}
We vary the number of items $K \in \{50, \allowbreak 100, \allowbreak 250, \allowbreak 500, \allowbreak 1000, \allowbreak 1250, \allowbreak 1500, \allowbreak 1750, \allowbreak 2250, \allowbreak 2500\}$ while fixing the number of workers at $M = 20k$ and the number of measurements at $N = 1k$, in order to study the effect of item scale.
As seen in Figure \ref{fig:sub_vary_K_-1_1}, all methods suffers as the number of items are increased. The performance of BoRaEM, HBTL and CrowdBT decays after $K = 1000$, HBTL and BoRaEM seem to degrade slower than CrowdBT under larger N regimes.
    
\end{enumerate}


Methods such as \texttt{BT}, \texttt{BARP}, and \texttt{RC}, which do not explicitly model annotator competence or susceptibility to bias, exhibit consistently poor performance in the presence of adversarial workers. Appendix~\ref{app:syn_data_beta_0_1} presents results for the regime $\beta \in [0, 1]$, which contains no adversarial annotators. 
In this setting, these methods perform comparatively better 
due to
absence adversaries.
In contrast, \texttt{BoRaEM}, \texttt{HBTL}, and \texttt{CrowdBT}, which explicitly model annotator competence, achieve superior performance in denser regimes where sufficient observations are available to reliably estimate the additional parameters. However, their performance exhibits a modest decline in sparse regimes, where limited data constrains accurate parameter estimation and increases susceptibility to estimation noise. Hardware details are in Appendix \ref{sec:hardware}.


\subsection{Real Datasets}

\begin{table*}[t]
  \caption{Summary statistics of the evaluation datasets.}
  \label{tab:dataset_stats}
  \centering
  \resizebox{0.9\linewidth}{!}{
    \begin{tabular}{@{}>{\raggedright\arraybackslash}p{0.15\linewidth}p{0.4\linewidth}p{0.4\linewidth}@{}}
        \toprule
        \textbf{Attribute} & \textbf{FaceAge Dataset} & \textbf{Reading Difficulty or Passage Dataset} \\
        \midrule
        Items ($K$) & 9,150 & 472 \\
        Workers ($M$) & 4,091 & 624 \\
        Comparisons ($N$) & 250,249 & 11,763 \\
        \addlinespace
        Description & Comparison of two face images to identify the older individual. & Pairwise difficulty assessment of reading passages. \\
        \addlinespace
        Ground Truth & Chronological age (years) & Standardized reading levels \\
        \addlinespace
        Spammer Handling & Removal based on performance on ``gold" instances (large age gaps). & Not specified in original study. \\
        \bottomrule
    \end{tabular}
  }
\end{table*}


\begin{table*}[htbp]
\centering
\caption{Benchmark results on FaceAge and Passage datasets. Accuracy and Weighted Accuracy are reported as percentages, while Kendall's Tau is reported as a decimal. Best overall results are shown in \textbf{bold}.}
\label{tab:reorganized_results}
\resizebox{0.9\linewidth}{!}{
\begin{tabular}{@{}lcccc@{}}
\toprule
\textbf{Method} & \textbf{Accuracy (\%)} & \textbf{Weighted Accuracy (\%)} & \textbf{Kendall's Tau} & \textbf{Runtime (s)} \\
\midrule
\rowcolor[gray]{0.95} \multicolumn{5}{l}{\textit{FaceAge Dataset}} \\
Simple \texttt{BT} & 79.01 & 87.20 & 0.5754 & $0.89 \pm 0.05$ \\
RC                 & 78.04 & 86.44 & 0.5582 & $0.07 \pm 0.03$ \\
FactorBT           & 79.16 & 87.30 & 0.5785 & $169.41 \pm 1.16$ \\
BARP               & 79.08 & 87.26 & 0.5769 & $51.08 \pm 2.85$ \\
CrowdBT            & 79.17 & 87.31 & 0.5787 & $138.16 \pm 3.86$ \\
HTCV               & 79.17 & 87.33 & 0.5786 & $41.31 \pm 1.41$ \\
HBTL               & 79.20 & 87.35 & 0.5793 & $83.10 \pm 3.32$ \\
BoRaEM (Ours)      & \textbf{79.21} & \textbf{87.36} & \textbf{0.5795} & $6.55 \pm 0.43$ \\
\midrule
\rowcolor[gray]{0.95} \multicolumn{5}{l}{\textit{Passage Dataset}} \\
Simple \texttt{BT} & 68.04 & 74.33 & 0.3407 & $1.39 \pm 0.13$ \\
RC                 & 65.22 & 71.30 & 0.2892 & $0.02 \pm 0.04$ \\
FactorBT           & 69.47 & 75.58 & 0.3676 & $100.85 \pm 2.82$ \\
BARP               & 68.19 & 74.51 & 0.3435 & $32.05 \pm 2.38$ \\
CrowdBT            & \textbf{70.02} & \textbf{75.96} & \textbf{0.3779} & $84.25 \pm 1.84$ \\
HTCV               & 69.30 & 75.30 & 0.3643 & $46.19 \pm 1.83$ \\
HBTL               & 69.53 & 75.46 & 0.3688 & $77.72 \pm 2.85$ \\
BoRaEM (Ours)      & 69.59 & 75.56 & 0.3698 & $8.28 \pm 0.71$ \\
\bottomrule
\end{tabular}
}
\end{table*}

\begin{figure*}[htbp]
    \centering
    \includegraphics[width=\linewidth]{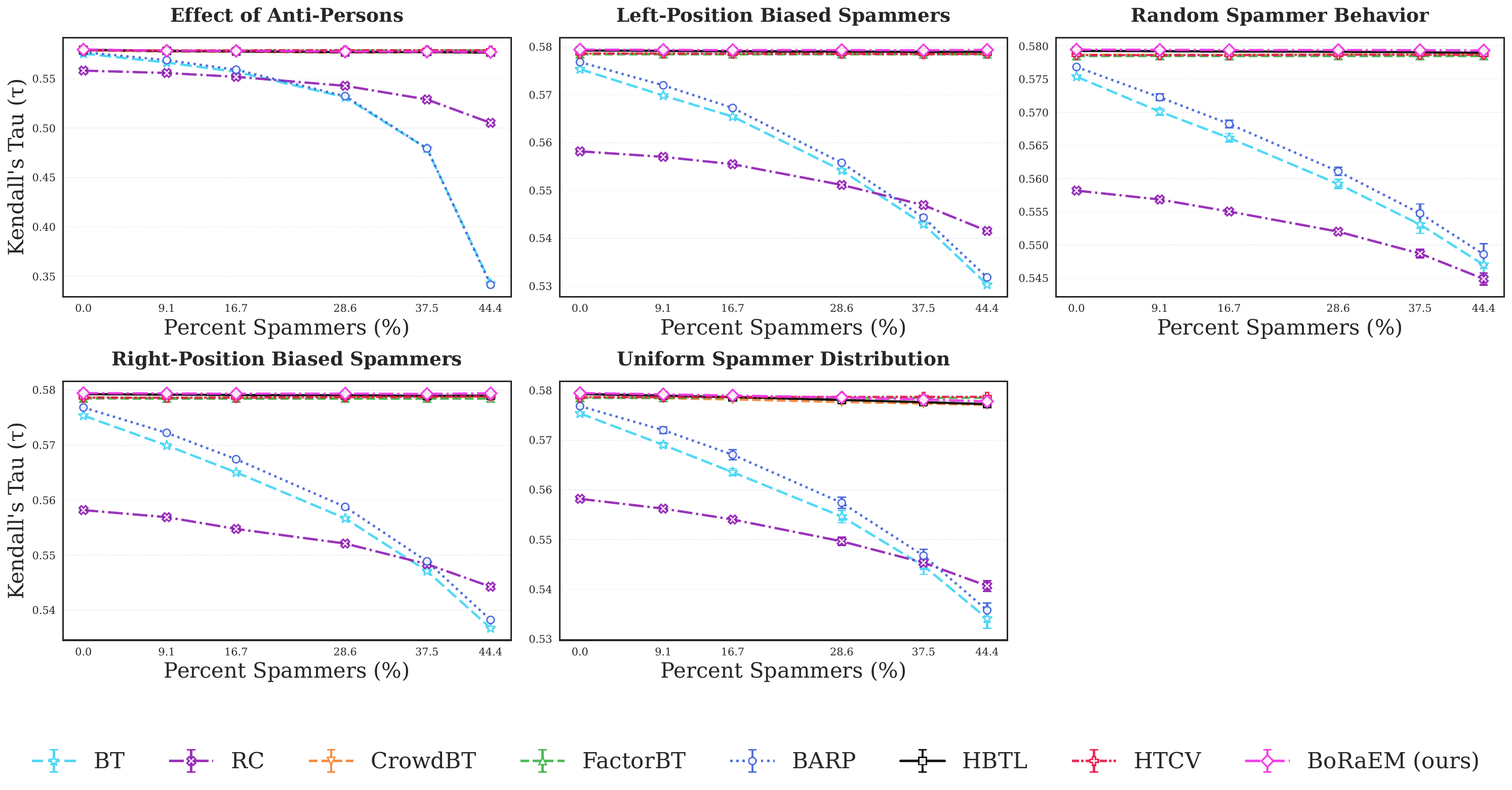}
    \caption{Kendall's Tau (\textit{$\tau$}) of different ranking methods under varying proportions of spammers for \textsc{FaceAge} dataset.}
    \label{fig:spammer_age_addition_tau}
\end{figure*}

For evaluating crowdsourcing methods on real-world data, we use two datasets: the FaceAge dataset (IMDB-WIKI-SbS)\footnote{\url{http://www-personal.umich.edu/~kevynct/datasets/wsdm_rankagg_2013_readability_crowdflower_data.csv}} \citep{pavlichenko2021imdb} and the Reading Difficulty (Passage) dataset\footnote{\url{https://github.com/Toloka/IMDB-WIKI-SbS/tree/main}} \citep{chen2013pairwise}. The dataset statistics are summarized in Table~\ref{tab:dataset_stats}. 

The FaceAge dataset~\citep{pavlichenko2021imdb}  is specifically tailored for large scale evaluation of various crowdsourcing methods and is a widely adopted benchmark for crowdsourced pairwise comparison experiments~\citep{ferrara2024bias}. On the FaceAge dataset, \texttt{BoRaEM} achieves the best performance, followed by \texttt{HBTL} and \texttt{CrowdBT}. 
The Passage dataset is comparatively much smaller.
On the \textsc{Passage} dataset, we observe that all methods achieve relatively low performance, with Kendall's tau values ranging from 0.29 to 0.38. 
This can be attributed to the small size of the dataset and the inherent difficulty of modeling preferences in reading difficulty, which is a cognitively complex and subjective task. On this dataset, \texttt{CrowdBT} achieves the highest performance, followed by \texttt{HBTL} and \texttt{BoRaEM}.

Table \ref{tab:reorganized_results} presents the results on the FaceAge and Passage datasets.
\texttt{BoRaEM}, \texttt{HBTL}, \texttt{HTCV}, \texttt{FactorBT} and \texttt{CrowdBT} remain competitive across both datasets. These results highlight the importance of jointly modeling worker competence and item rewards. 

\subsection{Spammer Addition Analysis}
\label{sec:spammer_analysis}
We evaluate robustness under adversarial conditions by injecting synthetic spammers commonly studied in the crowdsourcing literature. Specifically, we consider four archetypes: (i) \textit{random choosers}, who respond uniformly at random without regard to item quality \citep{vuurens2011much}; (ii) \textit{position-biased} spammers, who systematically favor one side of the comparison (e.g., always selecting the left or right option), also referred to as uniform spammers in prior work \citep{day1969position, xu2016false, vuurens2011much}; (iii) \textit{malicious adversaries}, who intentionally provide inverted or misleading responses to degrade aggregation quality \citep{kleindessner2018crowdsourcing, steinhardt2016avoiding}; and (iv) a \textit{combined} setting, where all aforementioned spammer types are present in equal proportion, reflecting a more realistic and heterogeneous adversarial environment. For fairness, each injected spammer is assigned the same expected number of comparisons as an original worker, ensuring performance differences arise from behavioral characteristics rather than annotation volume.

\subsubsection{Results}
We mainly focus on the FaceAge dataset, as it is a widely adopted benchmark for crowdsourcing \citep{pavlichenko2021imdb, ferrara2024bias}. Results on the Passage dataset are provided in Appendix \ref{app:spammer_passage}. 
Figure \ref{fig:spammer_age_addition_tau} presents the performance of different crowdsourcing methods in terms of Kendall’s tau ($\tau$) as the proportion of various types of spammers in the FaceAge dataset is varied. 
The results are averaged over 10 random seeds for generating spammers and 10 random seeds for random initializations. 
Similar trends are observed in terms of accuracy ($acc$) and weighted accuracy ($wacc$) (see Appendix \ref{app:spammer_faceage_more}).

We add spammers following the procedure of \citet{bugakova2019aggregation}. 
Specifically, if there are originally $M$ annotators, we add $0.1M$, $0.2M$, $0.4M$, $0.6M$, and $0.8M$ spammers according to the specified proportions. 
As the spammer proportion increases from 0\% to 44.4\%, \texttt{BoRaEM}, \texttt{HBTL}, \texttt{HTCV} consistently perform the best, followed by \texttt{CrowdBT} and \texttt{FactorBT}, while \texttt{BARP}, \texttt{BT}, and \texttt{RC} exhibit substantial performance drops. 
These results highlight the importance of explicitly modeling annotator competencies in crowdsourcing. Further, we have performed ablations over intialization and hyperparameters, and the results are detailed in Appendix \ref{sec:pgem_ablation}.

\section{CONCLUSION}
\label{sec:conclusion}
In this work, we addressed the problem of rank aggregation from pairwise comparisons in the presence of workers with varying competencies. We adapt the Boltzmann-rational model for modeling each pairwise comparison. We then use the Polya-Gamma random variable augmentation to simplify the logistic function to an exponential form enabling a simple and efficient EM formulation. We achieve tractable inference with theoretical convergence guarantees by treating this problem as a matrix sensing problem. 
Extensive experiments on both synthetic and real-world datasets demonstrate the advantages of
using our algorithm over several baselines and the algorithm remains robust even in the presence of a large fraction of spammers, highlighting its effectiveness for reliable rank aggregation in crowdsourced settings. 

Future work includes designing models for robust RLHF that can jointly predict item feature importance vectors while effectively identifying and mitigating the influence of spammers. 

\begin{contributions}
Arpit Agarwal conceived the idea of modeling annotator competence using the Boltzmann Rational Model, identified its connection to the matrix sensing problem, and led the manuscript preparation.
Kaustubh Shivshankar Shejole proposed the use of Pólya-Gamma augmentation, developed the method, conducted empirical studies, and generated the experimental visualizations.
Tanish Agarwal developed the theoretical foundations of the proposed approach under the guidance of Arpit Agarwal and Avishek Ghosh.
\end{contributions}

\begin{acknowledgements}
We would like to thank the Organizing Committee, Area Chairs, Senior Area Chairs, Meta-Reviewer, and the anonymous reviewers of UAI 2026 for their valuable feedback, constructive suggestions, and insightful comments, which helped improve the quality and presentation of this work.

The first author gratefully acknowledges Kush Mangukiya for contributions during the early stages of this research and Pratham Shah, Manish Kumar for their continuous support and encouragement.
\end{acknowledgements}

\sloppy
\bibliography{uai2026-template}

\newpage

\onecolumn

\title{Finding the Signal in the Spam (Supplementary Material)
}
\maketitle

\appendix

\section{More Details on BoRaEM Derivation}
\label{app:pgem_derivation}

\subsection{Pólya-Gamma Random Variables}
\label{sec:appendix_pg_definition}

Following \citet{polson2013bayesian}, if $\omega$ is a Pólya-Gamma distributed random variable with parameters $b > 0$ and $c \in \mathbb{R}$, denoted $\omega \sim \mathrm{PG}(b, c)$, then it is equal in distribution to an infinite weighted sum of gamma random variables:
\begin{equation*}
    \omega  \stackrel{d}{=}  
    \frac{1}{2\pi^2}\,
    \sum_{k=1}^\infty
    \frac{g_k}
    {\left(k - \frac12\right)^2 + \frac{c^2}{4\pi^2}}\,, 
    \quad
    g_k \sim \mathrm{Gamma}(b,1).
\end{equation*}
Here, ``\(\stackrel{d}{=}\)'' denotes equality in distribution, and $g_k \sim \text{Gamma}(b, 1)$ are independent Gamma random variables. Note that this is not the density function. Instead, equality in distribution means that the PG random variable on the left-hand side has the same cumulative distribution function as the random variable on the right-hand side.

\citet{polson2013bayesian} showed that all finite moments of $\omega$ admit closed form expressions; in particular,
\begin{equation}
    \label{eq:expectation_polya_gamma_variable}
    \E[\omega] =
    \frac{b}{2c}\,\tanh\left(\frac{c}{2}\right).
\end{equation}
They further proved the following two identities for $\omega \sim \mathrm{PG}(b,0)$, which are central to Pólya-Gamma data augmentation for logistic models:
\begin{enumerate}
    \item For any real $\psi$, one has
        \begin{equation}
            \label{eq:appendix_pg_identity}
            \frac{(e^{\psi})^a}{(1 + e^{\psi})^b} = 2^{-b} e^{\kappa \psi} \int_0^{\infty} e^{-\omega \psi^2 / 2} \, p(\omega\mid b,0) \, d\omega,
        \end{equation}
        where $\kappa = a - \frac{b}{2}$, and $p(\omega\mid b,0)$ is the density of a $\mathrm{PG}(b,0)$ random variable.

    \item The conditional density of \(\omega\) given \(\psi\) is again Pólya–Gamma with parameters $b$ and $\psi$, that is,
        \begin{equation}
            (\omega\mid \psi) \sim \mathrm{PG}(b,\psi).
        \end{equation}
\end{enumerate}

\subsection{Log Likelihood Derivation}
\label{sec:ll_detailed}

Let us consider the problem formulation given in Section \ref{sec:problem_formulation}, where there is a set of $K$ items, and one observes outcomes of pairwise comparisons
between these items, elicited from a set of $M$ crowdworkers.
Specifically, we are given a dataset $\mathcal{D} = \{(w_i, l_i, s_i)\}_{i=1}^N$ where there are $N$ pairwise comparisons; the $i$-th comparison is performed by worker $s_i \in [M]$ where $w_i \in [K]$ is the winning item and $l_i \in [K]$ is the losing item such that $l_i$ is not equal to $w_i$.

Consider Equation~\eqref{eq:appendix_pg_identity}. Let $ \eta_{i} = \beta_{s_i}(r_{w_i} - r_{l_i})$. For \(a=1\), \(b=1\), and \(\psi = \eta_{i}\), we have \(\kappa = \frac12\) and \(p(\omega_{i})=\mathrm{PG}(\omega_{i}\mid 1,0)\), which allows us to write

\begin{equation}
\frac{1}{1 + e^{-\eta_{i}}}
=
\frac{e^{\eta_i}}{1 + e^{\eta_{i}}}
=
\frac{1}{2}e^{\eta_{i}/2}
\int_{0}^{\infty}
\exp\left(-\frac{\omega_{i}\,\eta_{i}^2}{2}\right)\,p(\omega_{i})\,d\omega_{i}
\label{eq:pg_identity_extension}
\end{equation}

Let $\bm{\theta} = \{\textbf{r}, \bm{\beta}\}$.
Therefore, referring to Equation~\eqref{eq:eta_i} and ~\eqref{eq:pg_identity_extension}, we can write each term of the likelihood as
\begin{align}
    p(w_i, l_i, s_i | \bm{\theta})
    &=
     \frac{1}{1 + \exp{\left\{
        -\beta_{s_i}(r_{w_i} - r_{l_i})
     \right\}}}\\
     &=
    \frac{1}{2}e^{\eta_i/2}
    \int_{0}^{\infty}
    \exp\left(-\frac{\omega_{i}\eta_i^2}{2}\right)\,p(\omega_{i})\,d\omega_{i}\notag\\
    &=
    \frac{1}{2}
    \int_{0}^{\infty}
    \exp\left(\frac{\eta_i}{2} -\frac{\omega_{i}\eta_i^2}{2}\right) p(\omega_{i})\,d\omega_{i} \notag \\
\end{align}
which induces the augmented joint density
\begin{align}
    p(w_i, l_i, s_i, \omega_i | \bm{\theta})
    &\propto \exp\left(-\frac{\omega_{i}\eta_i^2}{2} + \frac{\eta_i}{2}\right) p(\omega_i)
    \notag,
\end{align}

Taking the logarithm and ignoring additive constants that do not depend on $\bm{\theta}$, we get the log-likelihood for the $i^{th}$ comparison as

\begin{align}
    \log(p(w_i, l_i, s_i, \omega_i | \bm{\theta}))
    &\propto
    \log\left[
\exp\left(\frac12\eta_{i} - \frac{\omega_{i}}{2}\,\eta_{i}^2\right)\,p(\omega_{i})\right]
+\text{const}
\\
&=
    -\frac{\omega_{i}\eta_i^2}{2} + \frac{\eta_i}{2}
    \notag,
    \\
    &= \frac{1}{2}\,\left( \beta_{s_i}(r_{w_i} - r_{l_i}) \right)
-\frac{\omega_{i}}{2}\,\left( \beta_{s_i}(r_{w_i} - r_{l_i}) \right)^2.
\end{align}

Summing over all $N$ comparisons in $\mathcal{D}$ gives the \textit{complete data} log-likelihood, given all $\omega_{i}$,
\begin{equation}
\ell(\bm{\theta}, \omega_i; \mathcal{D})
 = 
\frac12\sum_{i \in [N]}
\left[\,
\beta_{s_i}(r_{w_i} - r_{l_i})
- \omega_{i}\,\left( \beta_{s_i}(r_{w_i} - r_{l_i}) \right)^2
\right],
\end{equation}
up to additive constants independent of $\bm{\theta}$.





\subsection{Expectation-Maximization}
\label{sec:em_detailed}

\bigskip
\subsubsection{E-Step}
\label{subsec:e_step_detailed}
Given the current parameter estimate \(\theta^{(t)}\) and let $\boldsymbol{\omega}$ denote the set of all $w_i$ for $i=1$ to $N$ comparisons, we compute
\[
Q\left(\theta \mid \theta^{(t)}\right)
 = 
\mathbb{E}_{\,\omega \mid \mathcal{D},\,\theta^{(t)}}\left[\,
\ell(\bm{\theta}, \bm{\omega}; \mathcal{D})\right]
 = 
\frac1{2}\sum_{(w_i, l_i, s_i) \in \mathcal{D}}
\mathbb{E}_{\,\omega_i \mid \theta^{(t)}}\left[\,
\beta_{s_i}\,(r_{w_i}-r_{l_i})
 - \omega_i\,\beta_{s_i}^2\,(r_{w_i}-r_{l_i})^2
\right].
\]
Noting that the only term depending on $\omega_i$ is $\omega_i\,\beta_{s_i}^2\,(r_{w_i}-r_{l_i})^2$, we define
\[
\kappa_i^{(t)} 
 =  
\mathbb{E}\left[\omega_i \mid \theta^{(t)}\right].
\]
Then,
\begin{equation}
Q\left(\theta \mid \theta^{(t)}\right)
 = 
\frac1{2}\sum_{(w_i, l_i, s_i) \in \mathcal{D}}
\left[
\beta_{s_i}\,\left(r_{w_i} - r_{l_i}\right)
 - \kappa_i^{(t)}\,\beta_{s_i}^2\,\left(r_{w_i} - r_{l_i}\right)^2
\right].
\end{equation}

\bigskip

\paragraph{Computing \(\kappa_i^{(t)}\).} At iteration \(t\), we compute the conditional expectation 
\(\kappa_{i}^{(t)} = \mathbb{E}[\omega_{i} \mid \eta_{i}^{(t)}]\) for the latent 
variables \(\omega_i \sim \mathrm{PG}(1, \eta_i^{(t)})\), where 
\(\eta_i^{(t)} = \beta_{s_i}^{(t)} (r_{w_i}^{(t)} - r_{l_i}^{(t)})\). 
Invoking Eq.~\eqref{eq:expectation_polya_gamma_variable} yields:
\begin{equation}
\kappa_{i}^{(t)} = \mathbb{I}\left(\eta_{i}^{(t)} \neq 0\right) \frac{\tanh\left(\eta_{i}^{(t)}/2\right)}{2\eta_{i}^{(t)}} + \mathbb{I}\left(\eta_{i}^{(t)} = 0\right) \frac{1}{4}.
\end{equation}






\subsubsection{M-Step}

\label{subsec:m_step_detailed}
Note that while updating \boldsymbol{$\beta$},
\textbf{r} is kept fixed and vice-versa. 
\paragraph{Updating \textbf{$\beta$}}
\[
Q\left(\theta \mid \theta^{(t)}\right)
=\sum_{(w_i, l_i, s_i) \in \mathcal{D}}
\left[\,
\tfrac12\,\beta_{s_i}\,(r_{w_i}-r_{l_i})
 - \tfrac12\,\kappa_i^{(t)}\,\beta_{s_i}^2\,(r_{w_i}-r_{l_i})^2
\right],
\]
where \(\kappa_i^{(t)} = \mathbb{E}\left[\omega_i\mid \theta^{(t)}\right]\). We first maximize $Q$ with respect to each $\beta_s$. 
%
Let 
\[
I_s  = \left\{\,i : s_i = s\right\}
\]
denote the index set of comparisons done by worker $s$. Let $d_i = r_{w_i} - r_{l_i}$ denote the reward difference between the preferred item $w_i$ and the losing item $l_i$, evaluated at the current item reward estimates within the inner iterations of the M-step for all $i\in I_s$. Then, 
the fraction of \(Q\) depending on \(\beta_s\) is
\begin{equation}
Q_k\left(\beta_s\right)
 = 
\sum_{\,i\in I_s}
\left[
\tfrac12\,\beta_s\,d_i 
 - \tfrac12\,\kappa_i^{(t)}\,\beta_s^2\,d_i^2
\right].
\label{eq:Qk}
\end{equation}
For maximization, we require $\frac{\partial Q_k}{\partial \beta_s} = 0$, that is,
\begin{align}
\frac{\partial Q_k}{\partial \beta_s}
&=
\sum_{\,i\in I_s}
\left[
\tfrac12\,d_i 
 - \kappa_i^{(t)}\,\beta_s\,d_i^2
\right] = 0,
\label{eq:dQdBeta}
\end{align}
which on rearranging gives
\[
\tfrac12\sum_{\,i\in I_s} d_i 
 - \beta_s \sum_{\,i\in I_s} \kappa_i^{(t)}\,d_i^2 
 =  0
  \Longrightarrow  
\beta_s 
= 
\frac{\tfrac12\sum_{\,i\in I_s} d_i}{\sum_{\,i\in I_s} \kappa_i^{(t)}\,d_i^2}
 = 
\frac{\sum_{\,i\in I_s} d_i}{2\,\sum_{\,i\in I_s} \kappa_i^{(t)}\,d_i^2}.
\]

\paragraph{Updating \textbf{$r$}}

We now maximize $Q$ with respect to each $r_k$. 
Define the index sets
\[
W_j = \{\,i : w_i = j\}
\text{ and }
L_j = \{\,i : l_i = j\}
\]
where $W_j$ denotes the set of comparisons where item $j$ won, and $L_j$ denotes the set of comparisons where item $j$ lost. Then,
\begin{equation}
\frac{\partial Q}{\partial r_k}
 = 
 \left[
\sum_{\,i \in W_j}
\left[\,
\tfrac12\,\beta_{s_i}
 - \beta_{s_i}^2\,\kappa_i^{(t)}\,\left(r_k - r_{l_i}\right)
\right]
 + 
\sum_{\,i \in L_j}
\left[\,
-\tfrac12\,\beta_{s_i}
 + \beta_{s_i}^2\,\kappa_i^{(t)}\,\left(r_{w_i} - r_k\right)
\right]
\right]
= 0.
\label{eq:deriv_zero}
\end{equation}
which on rearranging gives
\[
r_k 
 =  \frac{A_k}{B_k}
\]
where
\begin{align*}
A_k
&= \sum_{\,i \in W_j} 
\left[\tfrac12\,\beta_{s_i} + \beta_{s_i}^2\,\kappa_i^{(t)}\,r_{l_i}\right]
 + 
\sum_{\,i \in L_j}
\left[-\tfrac12\,\beta_{s_i} + \beta_{s_i}^2\,\kappa_i^{(t)}\,r_{w_i}\right] \text{, and,}
\\
B_k
&= \sum_{\,i \in W_j}\beta_{s_i}^2\,\kappa_i^{(t)}
+ \sum_{\,i \in L_j}\beta_{s_i}^2\,\kappa_i^{(t)}.
\end{align*}

Observe that for fixed $\betav$, these update equations can be written as a linear system $H\rv = \mathbf{b}$ as follows.
\[
B_k r_k - \sum_{i \in W_j} \beta_{s_i}^2 \kappa_i r_{l_i}  - \sum_{i \in L_j} \beta_{s_i}^2 \kappa_i r_{w_i}=  \sum_{i \in W_j} \left(\frac{1}{2} \beta_{s_i}\right) + \sum_{i \in L_j}\left(-\frac{1}{2}\beta_{s_i} \right)
\]
The coefficient matrix $H$ and vector $\mathbf{b}$ are given by
\begin{equation*}
    H_{jl} =
    \begin{cases}
         \sum_{i \in \mathcal{D}_{jl}} - \beta_{s_i}^2 \kappa_i^{(t)}, \quad j\neq l\\
        B_j, \quad j=l\\
    \end{cases}
\end{equation*}
and
\[
b_j = \sum_{i \in W_j} \frac{1}{2}\beta_{s_i} + \sum_{i \in L_j} \left(-\frac{1}{2} \beta_{s_i}\right)
\]

By construction, $H$ is symmetric, $H_{jl} \le 0$ for $j\neq l$, and $H_{jj} = \sum_{l\neq j} |H_{jl}|$. Thus, $H$ is diagonally dominant with non-negative diagonal entries, and it is positive semidefinite.
We can further observe that $H\bm{1} = \bm{0}$. Now, suppose $\mathbf{x}$ satisfies $H\mathbf{x} = \bm{0}$ and let $j$ be such that $x_j = \max_i x_i$. Using the zero row sum property, we get that
\[
0 = (H\mathbf{x})_j = \sum_{l\neq j} H_{jl} (x_l - x_j)
\]
Since $H_{jl} \le 0 $ and $x_l - x_j \le 0$, each term is nonnegative, hence each term must be zero. Therefore, $x_j = x_l$ whenever $H_{jl} \neq 0$. If the comparison graph induced by the nonzero off-diagonal entries of $H$ is connected, all entries of $\mathbf{x}$ are equal, hence $\mathbf{x} = c\bm{1}$. The centering constraint $\bm{1}^\top \mathbf{x} = 0$ forces $c = 0$, so $\mathbf{x} = \bm{0}$. Therefore, $H$ is positive definite on the mean-zero subspace, and the constrained linear system admits a unique solution.

\section{Convergence Analysis}
\label{sec:convergence_analysis}

Let $\mathcal{D}$ be the dataset of pairwise comparisons, $N = |\mathcal{D}|$ denotes the number of datapoints, $K = | \rv|$ the number of items, and $M = |\betav|$ the number of workers. For each worker $j \in [M]$, we constrain $\beta_j \in [-1, 1]$. We assume that the rewards are bounded as $r_i \in [-R, R]$ for all items $l \in [K]$ for some fixed constant $R > 0$.
In each iteration $t$ of the EM algorithm, we independently set
\[
c_i^2 = \kappa_i^{(t)} = \frac{1}{2\eta_i^{(t)}} \tanh\left(\frac{\eta_i^{(t)}}{2}\right), \quad \text{where } \eta_i^{(t)} = \beta_{s_i}^{(t)} \left(r_{w_i}^{(t)} - r_{l_i}^{(t)}\right).
\]
Since $\beta_{s_i} \in [-1,1]$ and $r_{w_i}, r_{l_i} \in [-R,R]$, it follows that $\abs{\eta_i} \le 2R$.\\

The function $f(x) = \frac{\tanh(x/2)}{2x}$ is positive, even, and monotonically decreasing for $x>0$. Therefore, there exists a constant
$$\zeta = \frac{1}{4R} \tanh(R) > 0$$
such that $c_i^2 \in [\zeta, 1/4]$ for all $i$.



For each worker $s$, let $G_s$ denote the comparison graph induced by the worker's pairwise comparisons $\mathcal{D}_s$. We assume that $G_s$ is acyclic.

Finally, we assume that the worker index and the two item indices (denoted by random variables $B, W,$ and $L$, respectively) are independent and uniformly distributed as $B \sim \text{Uniform}([M]), W,L \sim \text{Uniform}([K])$. The comparison outcome is then generated according to \ref{eq:boltz}. Under this model, we can show the following lemma.

\setcounter{lemma}{0}
\begin{lemma}
\label{thm:rip_main}
Define matrix operators $A_i \in \reals^{M \times K}$ for each $i \in [N]$ as follows:
\begin{equation*}
A_i^\top = 2c_i \alpha 
\begin{array}{c@{}c}
 & \begin{array}{c c c c c}
  & & s_i & & 
 \end{array} \\ &
\begin{bmatrix}
0 & 0 & \cdots & 0 & \cdots & 0 \\
\vdots & \vdots & & \vdots & & \vdots \\
0 & 0 & \cdots & +1 & \cdots & 0\\  
\vdots & \vdots & & \vdots & & \vdots \\
0 & 0 & \cdots & -1 & \cdots & 0 \\ 
\vdots & \vdots & & \vdots & & \vdots \\
0 & 0 & \cdots & 0 & \cdots & 0
\end{bmatrix}
\begin{array}{c} 
 \\[-2ex]
 \vdots \\ 
 w_i \\ 
 \vdots \\ 
 l_i \\ 
 \vdots
\end{array}
\end{array}
\end{equation*}
where $+1$ appears at position $(w_i, s_i)$, $-1$ at position $(l_i, s_i)$, with all other entries equal to $0$, and $\alpha = \sqrt{MK/2N}$. Equivalently, $A_i = 2c_i \alpha \ \bm{e_{s_i}} (\bm{e_{w_i}} - \bm{e_{l_i}})^\top$. Let $X \in \reals^{M \times K}$ be any matrix of arbitrary rank $r$ with $X\bm{1} = \bm{0}$, where $\bm{1}$ and $\bm{0}$ denote the all-ones and all-zeros vectors, respectively. Define the linear operator $\opA \colon \reals^{M\times K} \to \reals^N$ as $\opA(X) = (\langle A_1, X\rangle, \dots, \langle A_N, X\rangle)^\top$. Then, for any $\eta > 0$, there exists $\mu > 0$ and $\delta>0$ such that with probability at least $1-\eta$:
\[
(1-\mu)\norm{X}{F}^2 \le \|\opA(X)\|_2^2 \le (1+\mu) \norm{X}{F}^2 \, ,
\]
provided that 
$$N\geq C\;\frac{MK}{\delta^2}\ln\left(\frac{2}{\eta}\right),$$ where $C > 0$ is some constant.
\end{lemma}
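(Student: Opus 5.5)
The plan is to write $\|\opA(X)\|_2^2$ as a sum of $N$ i.i.d.\ bounded random variables, compute its mean, and apply a Hoeffding-type bound. For a fixed $X$ with $X\bm{1}=\bm{0}$ we have
\[
\langle A_i, X\rangle = 2c_i\alpha\,(X_{s_i w_i}-X_{s_i l_i}),
\qquad\text{so}\qquad
\|\opA(X)\|_2^2=\sum_{i=1}^N Z_i,\quad Z_i = \frac{2MK}{N}\,c_i^2\,(X_{s_iw_i}-X_{s_il_i})^2 .
\]
Under the sampling model, $(s_i,w_i,l_i)$ are i.i.d.\ with $s_i$, $w_i$, $l_i$ independent and uniform. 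First I compute the mean with $c_i^2$ replaced by its bounds $\zeta\le c_i^2\le 1/4$:
\[
\E_{w,l}\bigl[(X_{sw}-X_{sl})^2\bigr]=\frac{2}{K}\sum_j X_{sj}^2-\frac{2}{K^2}\Bigl(\sum_j X_{sj}\Bigr)^2=\frac{2}{K}\|X_{s,:}\|_2^2 ,
\]
where the cross term vanishes exactly because $X\bm{1}=\bm{0}$. This is where the centering hypothesis enters. Averaging over $s$ gives $\E[(X_{sw}-X_{sl})^2]=\frac{2}{MK}\|X\|_F^2$, so $\E\sum_i \frac{2MK}{N}(X_{s_iw_i}-X_{s_il_i})^2 = 4\|X\|_F^2$. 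Hence the normalization $2c_i\alpha$ with $c_i^2$ near $1/4$ yields mean roughly $\|X\|_F^2$. In general the mean lies in $[4\zeta\|X\|_F^2,\ \|X\|_F^2]$, which is what forces the lemma to quote an unspecified $\mu>0$ rather than $\mu=\delta$: I take $1-\mu$ to absorb both the factor $4\zeta$ and the fluctuation $\delta$. Since $c_i$ depends only on the previous iterate, I treat it as fixed given $(s_i,w_i,l_i)$; equivalently, I bound $Z_i$ between the two versions with $c_i^2=\zeta$ and $c_i^2=1/4$ and apply concentration to each.

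Next comes concentration for fixed $X$. Each term satisfies $0\le Z_i\le \frac{2MK}{N}\cdot\frac14\cdot 4\max_{s,j}X_{sj}^2$. Hoeffding's inequality then gives
\[
\Pr\Bigl(\bigl|\textstyle\sum_i Z_i-\E\sum_i Z_i\bigr|\ge \delta\|X\|_F^2\Bigr)\le 2\exp\Bigl(-\frac{2N\delta^2\|X\|_F^4}{(2MK\max X_{sj}^2)^2}\Bigr).
\]
To obtain the stated $N\gtrsim \frac{MK}{\delta^2}\ln(2/\eta)$, I need the ratio $\max_{s,j}X_{sj}^2/\|X\|_F^2$ to be of order $1/(MK)$, that is, an incoherence/spikiness condition. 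For the matrices actually used, $X=\betav\rv^\top$ with $\beta\in[-1,1]$ and $r\in[-R,R]$, normalized to unit RMS, this ratio is controlled by the entry bounds. I would state this as part of the ``mild assumptions'' and let $C$ absorb the constants. Alternatively, Bernstein's inequality with the variance $\E Z_i^2$ gives the same rate under the same flatness condition. Setting the tail equal to $\eta$ gives the sample-size requirement.

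The main obstacle is uniformity. The lemma is phrased for ``any'' $X$, but the argument above is pointwise. If a uniform statement is needed, for example over rank-$r$ centered matrices for use with \citet{park2016nonsquarematrixsensingspurious}, I would use an $\epsilon$-net over the unit Frobenius sphere of rank-$r$ matrices satisfying $X\bm{1}=\bm{0}$, whose size is $(9/\epsilon)^{(M+K)r}$ (as in \citet{candes2010tight}). I would then take a union bound, replacing $\ln(2/\eta)$ by $\ln(2/\eta)+(M+K)r\ln(9/\epsilon)$, which is dominated by the $MK$ factor, and extend from the net to the whole set by the standard triangle-inequality bootstrap on $\|\opA\|$ restricted to rank-$2r$. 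The spikiness issue persists on the net, so I would restrict to the bounded, incoherent class implied by the box constraints. I expect that restriction, rather than the calculation itself, to be the delicate point of the proof.
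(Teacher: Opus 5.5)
\paragraph{The gap: your concentration step.}
Hoeffding uses only the range of $Z_i$ (your $Z_i$ is the paper's $Y_i=\langle A_i,X\rangle^2$). That range is of order $\frac{MK}{N}\max_{s,j}X_{sj}^2$, and for a general centered $X$ the only available bound is $\max_{s,j}X_{sj}^2\le\|X\|_F^2$. Your exponent is then only guaranteed to be $N\delta^2/(2M^2K^2)$, so you would need $N$ of order $(MK)^2\delta^{-2}\ln(2/\eta)$. That is a factor $MK$ more than the lemma claims.

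Your remedy, an incoherence restriction, does not rescue the statement. The lemma is asserted for every $X$ with $X\bm{1}=\bm{0}$ of arbitrary rank, so restricting to flat $X$ proves something weaker. The entry bounds also do not imply flatness: they cap $|X_{sj}|$ but do not bound $\|X\|_F$ from below. For instance, $\betav=\bm{e}_1$ and $\rv=(x,-x,0,\dots,0)^\top$ with $|x|\le R$ give a centered, box-feasible rank-one $X$ with $\max_{s,j}X_{sj}^2/\|X\|_F^2=1/2$. Even with unit-RMS $\rv$, putting all competence on one worker leaves this ratio at least $1/K$, not $1/(MK)$.

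\paragraph{The missing idea: Bernstein with the right variance bound.}
The summands are nonnegative, and their mean is smaller than their worst-case size by a factor of order $MK$. Bernstein with the variance bound $\E Y_i^2\le(\max Y_i)\,\E Y_i$ therefore needs no flatness at all, contrary to your remark that it requires the same condition. This is precisely the paper's argument:
\begin{itemize}
\item from $|X_{sj}|\le\|X\|_F$ one gets $0\le Y_i\le 16c_i^2\alpha^2\|X\|_F^2$;
\item combined with $\E Y_i=\frac{4c_i^2}{N}\|X\|_F^2$, this yields $\sum_i\E Y_i^2=O\bigl(\frac{MK}{N}\|X\|_F^4\bigr)$;
\item Bernstein at $t=\delta\|X\|_F^2$ then gives the tail $2\exp(-c\,\delta^2N/(MK))$ for $\delta\le 1$.
\end{itemize}
This is exactly the stated requirement $N\ge C\frac{MK}{\delta^2}\ln(2/\eta)$, with no assumption on $X$ beyond centering. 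Per term, Hoeffding pays the squared range, of order $(MK)^2\|X\|_F^4/N^2$, while Bernstein pays range times mean, of order $MK\|X\|_F^4/N^2$.

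\paragraph{The rest of your plan.}
Your mean computation matches the paper's. Bounding $Z_i$ between its $c_i^2=\zeta$ and $c_i^2=1/4$ envelopes is a cleaner treatment of the $c_i$ than the paper's, which pulls $c_i^2$ out of $\E_{B,W,L}$ as though it did not depend on the sampled indices.

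Your uniformity paragraph goes beyond the lemma, which (like the paper's proof) concerns a fixed $X$. If you did take a union bound over a net, the requirement would become $N\ge C\frac{MK}{\delta^2}\bigl[(M+K)r\ln(9/\epsilon)+\ln(2/\eta)\bigr]$. The net term multiplies the $MK$ factor rather than being dominated by it.
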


\begin{proof}
We first show that $\opA$ satisfies RIP in expectation, and we subsequently bound the deviation of $\|\opA(X)\|_2^2$ from its expectation to achieve the desired result. \\

Consider $\langle A_i, X \rangle^2 = {4c_i^2 \alpha^2} (X_{s_i, w_i} - X_{s_i, l_i})^2$. Under uniform sampling of $B,W,L$ and the assumption $X\bm{1} = \bm{0}$, we have that
\[
\E_{B,W,L} \langle A_i, X \rangle^2 = \E_{B}\left[ \E_{W,L}\left[\langle A_i, X \rangle^2 | B = s_i\right]\right]
\]
Conditioned on $B = s_i$, we get the following expression. 
\begin{align*}
    \E_{W,L}\left[\langle A_i, X \rangle^2\right]
    &= {4c_i^2\alpha^2}\E_{W,L}\left[(X_{s_i,w_i} - X_{s_i,l_i})^2\right]\\
    &= 4c_i^2\alpha^2\left(\E_{W,L} X_{s_i,w_i}^2 + \E_{W,L} X_{s_i,l_i}^2 - 2\E_{W,L} X_{s_i,w_i}X_{s_i,l_i}\right)\\
    &= 8c_i^2\alpha^2 \left(\E_{W,L} X_{s_i,w_i}^2 - \left(\E_{W,L} X_{s_i,w_i}\right)^2\right)
\end{align*}

Now, note that
\[
\E_{W,L} X_{s_i,w_i}^2 = \frac{1}{K}\sum_{j=1}^K X_{s_i,j}^2 \midand \E_{W,L} X_{s_i,w_i} = \frac{1}{K}\sum_{j=1}^K X_{s_i,j},
\]
and define $S_b = \sum_{j=1}^K X_{b,j}$ and $T_b = \sum_{j=1}^K X_{b,j}^2$. Observe that $S_{b} = 0$ for every $b \in [M]$ since $X\bm{1} = \bm{0}$, which implies that $\E_{W,L} X_{s_i,w_i} = 0, \text{and } \E_{W,L}[\langle A_i, X \rangle^2] = {4c_i^2\alpha^2}(\frac{2}{K} T_{s_i})$. Finally, 
\begin{equation}
\label{eq:ex_AiX^2}
\E_{B,W,L} \langle A_i, X \rangle^2 = \frac{1}{M} \left( {4c_i^2\alpha^2}\cdot \frac{2}{K}\sum_{j=1}^M T_{s_j} \right) = \frac{1}{M} \left( {4c_i^2\alpha^2}\cdot \frac{2}{K}\sum_{j=1}^M \sum_{i=1}^K X_{ji}^2 \right) = \frac{1}{M} {4c_i^2\alpha^2}\frac{2}{K} \norm{X}{F}^2 = \frac{4c_i^2}{N} \norm{X}{F}^2,
\end{equation}

where the last equality follows from the definition of $\alpha$. Summing over all $N$ datapoints, we obtain:
\[
\E_{B,W,L} \| \opA(X) \|_2^2 = \sum_{i=1}^N \E_{B,W,L} \langle A_i, X \rangle^2 = \norm{X}{F}^2 \sum_{i=1}^N \frac{4c_i^2}{N}
\]

Note that $4\zeta \le 4c_i^2 \le 1$. Take $\epsilon = 1-4\zeta > 0$, such that
\begin{equation}
\label{eq:expectation_operator_bound}
(1-\epsilon) \norm{X}{F}^2 \leq 4 \zeta \norm{X}{F}^2 \le \E_{B,W,L} \| \opA(X) \|_2^2 \le \norm{X}{F}^2 \le (1+\epsilon) \norm{X}{F}^2.
\end{equation}

Now, we will bound the deviation of $\| \opA(X) \|_2^2$ from its expectation. 
Define random variables $Y_i = \langle A_i, X \rangle^2$; then $\sum_{i=1}^N Y_i = \| \opA(X) \|_2^2$. We will first bound $Y_i$. Note that $Y_i \ge 0$. Further, 
\[
Y_i = \langle A_i, X \rangle^2 = {4c_i^2\alpha^2} (X_{s_i,w_i} - X_{s_i,l_i})^2 \le {4c_i^2\alpha^2}(|X_{s_i,w_i}| + |X_{s_i,l_i}|)^2.
\]
Since each $X_{ij}^2 \le \norm{X}{F}^2$, we have that $|X_{ij}| \le \norm{X}{F}$, and consequently,
\begin{equation}
\label{eq:Y_bounds}
0 \le Y_i \le  {16c_i^2\alpha^2}\norm{X}{F}^2
\end{equation}

Similarly, $\E Y_i^2 \le \max{Y_i} \, \cdot \, \E Y_i$. We know from Equation~\eqref{eq:ex_AiX^2} that $\E Y_i = \E\langle A_i,X \rangle^2 = \frac{4c_i^2}{N}\norm{X}{F}^2$, which gives 
\begin{equation}
\label{eq:EYsqd_bounds}
\E Y_i^2 \le {16c_i^2\alpha^2}\norm{X}{F}^2 \ \cdot \  \frac{4c_i^2}{N}\norm{X}{F}^2 = \frac{64c_i^4\alpha^2}{N}\norm{X}{F}^4 \le \frac{64c_i^2\alpha^2}{N}\norm{X}{F}^4\; ,
\end{equation}
where the final inequality follows from the fact that $c_i < 1$. Define random variables $Z_i = Y_i - \ EY_i$. Then, $Z_i$ are independent and zero-mean, $|Z_i| \le |Y_i| + |\E Y_i| \le 2|Y_i| \le {32c_i^2\alpha^2}\norm{X}{F}^2$, and, $\E Z_i^2 \le \E Y_i^2 \le \frac{64c_i^2\alpha^2}{N}\norm{X}{F}^4$.


Then, using Bernstein's inequality on $Z_i$, we get,
\begin{align*}
\Pr{\left( \left| \sum_{i=1}^N Z_i \right| > \delta\norm{X}{F}^2 \right)} \le 
2 \exp{\left( -\frac{\delta^2 \norm{X}{F}^4/2}{ N \cdot \frac{64c_i^2\alpha^2}{N}\norm{X}{F}^4 + \frac13 \cdot 16\delta c_i^2\alpha^2\norm{X}{F}^4} \right)}
&\le 2 \exp{\left( -\frac{\delta^2 \norm{X}{F}^4}{{80c_i^2\alpha^2}\norm{X}{F}^4} \right)} \\
&= 2 \exp{\left( -\frac{\delta^2}{80c_i^2\alpha^2} \right)}.
\end{align*}
For this to be bounded above by $\eta$, we need 
\[
2 \exp{\left( -\frac{\delta^2}{80c_i^2\alpha^2} \right)} = 2 \exp{\left( -\frac{\delta^2 N}{40c_i^2 MK} \right)} \le \eta \implies
N \ge C\frac{MK}{\delta^2}\log{\left( \frac{2}{\eta} \right)},
\]
for some constant $C > 0$. Therefore, with probability at least $1-\eta$, we get that 
\begin{align*}
\left| \sum_{i=1}^N Z_i \right| \le \delta\norm{X}{F}^2
&\implies \left| \sum_{i=1}^N \langle A_i,X \rangle^2  - \sum_{i=1}^N \E\langle A_i,X \rangle^2\right| \le \delta\norm{X}{F}^2\\ \\
&\implies \left| \|\opA(X)\|_2^2  - \E\|\opA(X)\|_2^2 \right| \le \delta\norm{X}{F}^2 \\\\
&\implies (1-\mu) \norm{X}{F}^2 \le \|\opA(X)\|_2^2 \le (1+\mu)\norm{X}{F}^2
\end{align*}
with $\mu = \epsilon + \delta$.
\end{proof}
We now show that the M-step optimization problem can be written as a matrix sensing problem. Recall that the M-step at iteration $t$ maximizes
\begin{align*}
Q(\bm{\theta} \mid \bm{\theta}^t)
&= \frac{1}{2} \sum_{i=1}^N \left( \beta_{s_i} \left(r_{w_i} - r_{l_i} \right) - c_i^2\beta_{s_i}^2 \left(r_{w_i} - r_{l_i} \right)^2 \right)\\
&= -\frac{1}{2} \sum_{i=1}^N \left( c_i^2\beta_{s_i}^2 \left(r_{w_i} - r_{l_i} \right)^2 - 2\left( \frac{1}{2c_i} \right) c_i\beta_{s_i} \left(r_{w_i} - r_{l_i} \right) + \frac{1}{4c_i^2} - \frac{1}{4c_i^2} \right)\\
&= -\frac{1}{2} \sum_{i=1}^N \left( c_i \beta_{s_i} \left(r_{w_i} - r_{l_i} \right) - \frac{1}{2c_i} \right)^2 + \sum_{i=1}^N \frac{1}{8c_i^2} \\
\end{align*}

Therefore,
\begin{align*}
\arg\max_{\bm{\theta}} Q(\bm{\theta} \mid \bm{\theta}^t)
&= \arg\max_{\bm{\theta}} \left(-\frac{1}{2} \sum_{i=1}^N \left( c_i \beta_{s_i} \left(r_{w_i} - r_{l_i} \right) - \frac{1}{2c_i} \right)^2 + \sum_{i=1}^N \frac{1}{8c_i^2}\right)\\
&= \arg\min_{\bm{\theta}} \sum_{i=1}^N \left( c_i \beta_{s_i} \left(r_{w_i} - r_{l_i} \right) - \frac{1}{2c_i} \right)^2\\
&= \arg\min_{\bm{\theta}} \sum_{i=1}^N \left( 2c_i\alpha\, \beta_{s_i} \left(r_{w_i} - r_{l_i} \right) - \frac{\alpha}{c_i} \right)^2\\
&= \arg\min_{\bm{\theta}} \sum_{i=1}^N \left( \langle A_i, \betav \rv^\top \rangle - \frac{\alpha}{c_i} \right)^2\\
\end{align*}
where the second equality follows from the fact that $c_i$ are constants, and the fourth equality follows from the definition of matrices $A_i$. Define the vector $\mathbf{u}^{(t)}\in\reals^N$ with $u_i^{(t)} = \frac{\alpha}{c_i}$. Then,
\begin{equation*}
\arg\max_{\bm{\theta}} Q(\bm{\theta} \mid \bm{\theta}^t)
    = \arg\min_{\bm{\theta}} \left\| \opA(\betav \rv^\top) - \mathbf{u}^{(t)} \right\|_2^2,
\end{equation*}

We now invoke existing matrix sensing guarantees. 
Let $X^*_t \in \reals^{M \times K}$ denote the underlying true matrix and $X_{1,t}^*$ denote its best rank-1 approximation. Define the distance between a factorization $(\betav, \rv)$ and $X_{1,t}^*$ as
\[
\text{DIST} (\betav, \rv; X_{1,t}^*) = \min_{(\betav^*, \rv^*) \in \mathcal{X}_1^*} \left\| \begin{bmatrix} \betav \\ \rv \end{bmatrix} - \begin{bmatrix} \betav^* \\ \rv^* \end{bmatrix} \right\|_F
\]
where $\mathcal{X}_1^*$ is the set of balanced rank-one factorizations for $X_{1,t}^*$, defined as
\[
\mathcal{X}_1^* = \left\{ (\betav^*, \rv^*) : \betav^* \in \mathbb{R}^{M}, \rv^* \in \mathbb{R}^{K}, \betav^* \rv^{*\top} = X_{1,t}^*, \sigma_1(\betav^*) = \sigma_1(\rv^*) = \sigma_1(X_{1,t}^*)^{1/2}, \right\}.
\]
Then, restating Remark~3 from~\cite{park2016nonsquarematrixsensingspurious},
\begin{remark}
\label{remark:highrank_matrix_sensing}
\textbf{(High-rank matrix sensing)}
Suppose that $X^*_t$ is of arbitrary rank, and let $X_{1,t}^*$ denote its best rank-1 approximation. Let $\mathbf{b} = \opA(X^*)$ and let $(\betav^*, \rv^*)$ be a balanced factorization of $X_{1,t}^*$. If $0 \le \delta_2 \le \delta_4 < 0.005$, then, for any local minimum $(\betav, \rv)$ the distance to $(\betav^*, \rv^*)$ is bounded by
\[
\textsc{DIST} (\betav, \rv; X^*_t) \le \frac{1250}{3\sigma_1(X^*_t)}\cdot \norm{\opA(X^*_t - X_{1,t}^*)}{2}.
\]
\end{remark}
Using this, we can give the following theorem.
\convergence*
\begin{proof}
Define $X^*_t \in \reals^{M\times K}$ as any matrix consistent with the equations $[\mathcal{A}(X^*_t)]_i = u_i^{(t)}$. From the definition of the sensing matrices $A_i$, this implies
\[
u_i^{(t)} = \frac{\alpha}{c_i} = 2c_i \alpha (X^*_{t, \{s_i, w_i\}} - X^*_{t, \{s_i, l_i\}}) \implies X^*_{t, \{s_i, w_i\}} - X^*_{t, \{s_i, l_i\}} = \frac{1}{2c_i^2}
\]
Under the assumption that for each worker $s$, the comparison graph is acyclic, and with the additional constraint that $\sum_{l\in [K]} X^*_{t,\{s_i, l\}} = 0$, this linear system is always consistent. Thus, a solution $X^*_t$ exists.

Now, our operator $\opA$ satisfies RIP on matrices $X$ of arbitrary rank $r$ with $X\bm{1} = \bm{0}$. Because the coefficients $c_i^2 \in [\zeta, 1/4]$, we can define $R = 0.1$, such that $\zeta > 0.249$ and $\varepsilon < 0.004$ and the RIP constants $\mu_r < 0.005$, independent of $r$.

Note that $X^*_t$ may not be rank-1. So, we denote by $X_{1,t}^*$ its best rank-1 approximation. Then, by Remark~\ref{remark:highrank_matrix_sensing}, any local minimum of the objective
\[
\min_{\betav,\rv} \left\| \opA(\betav \rv^\top) - \mathbf{u}^{(t)} \right\|_2^2
\]
satisfies
\[
\text{DIST}(\betav, \rv; X^*_t) \leq \frac{1250}{3\sigma_1(X^*_t)} \|\mathcal{A}(X^*_t - X_{1,t}^*)\|_2.
\]

\end{proof}

\section{Evaluation Metrics: More details}
\label{app:eval_metrics}

\begin{table*}[ht]
\centering
\caption{Summary of Quantitative Metrics. Here, $g_i$ and $s_i$ denote the ground truth and predicted quality scores for item $i \in D_g$, respectively.}
\label{tab:metrics}
\resizebox{0.9\linewidth}{!}{

\begin{tabular}{p{0.2\linewidth}p{0.4\linewidth}p{0.2\linewidth}p{0.2\linewidth}}
\toprule
\textbf{Metric} & \textbf{Formulation} & \textbf{Pros} & \textbf{Cons} \\ \midrule

\textbf{Kendall's Tau} &
$\dfrac{C - D}{\frac{1}{2} k (k-1)}$ &
Standardized rank correlation range; robust to outliers. &
Treats all pairwise swaps with equal importance. \\[6pt]

\textbf{Accuracy} &
$\dfrac{\sum \mathbb{I}(g_i > g_j \land s_i > s_j)}{\sum \mathbb{I}(g_i > g_j)}$ &
Highly interpretable; measures correct ordering probability. &
Does not account for the magnitude of score gaps. \\[6pt]

\textbf{Weighted Acc.} &
$\dfrac{\sum w_{ij}\, \mathbb{I}(g_i > g_j \land s_i > s_j)}{\sum w_{ij}\, \mathbb{I}(g_i > g_j)}$ &
Penalizes significant ranking errors more heavily. &
Metric value depends on the score distribution. \\

\bottomrule
\end{tabular}

}
\begin{flushleft}
\textit{Note: $C$ and $D$ are concordant and discordant pairs; $k = |D_g|$ is the number of items; and $w_{ij} = |g_i - g_j|$ represents the weight of the difference between ground truth scores.}
\end{flushleft}
\end{table*}

Table~\ref{tab:metrics} summarizes the quantitative metrics used to evaluate the predicted item rewards against the ground truth. Kendall's Tau measures the rank correlation between item pairs, providing a standardized indication of agreement while treating all swaps equally. Accuracy computes the probability of correctly ordering item pairs, offering intuitive interpretability but ignoring the magnitude of differences. Weighted Accuracy extends this by penalizing larger ranking errors more heavily, making it sensitive to the score gaps between items. Together, these metrics provide complementary perspectives on ranking performance.

\section{Crowdsourcing Methods Considered}
\label{app:code_methods}
In this section, we evaluate all the representative crowdsourcing methods discussed in Section~\ref{sec:related_work}. These include classical approaches such as the Bradley--Terry--Luce (BT) model~\cite{bradley1952rank}, RankCentrality (RC)~\cite{negahban2012iterative}, CrowdBT~\cite{chen2013pairwise}, FactorBT~\cite{bugakova2019aggregation}, and BARP~\cite{ferrara2024bias}. 

The implementations of RC, CrowdBT, and BARP are obtained from the publicly available repository released by \citet{ferrara2024bias}\footnote{\url{https://github.com/Ambress92/Bias-Aware-Ranker-from-Pairwise-comparisons}}, while BT\footnote{\url{https://github.com/lucasmaystre/choix}} and FactorBT\footnote{\url{https://github.com/Toloka/crowd-kit}} are sourced from their respective open-source repositories. 
These methods differ in how they model pairwise comparisons and annotator reliability. All methods were adapted to PyTorch and optimized for time efficiency.


\section{Implementation Details of BoRaEM}
\label{app:pgem_implementation}

We implement BoRaEM in PyTorch with full GPU support and deterministic seeding for reproducibility. 
Item rewards are initialized to zero, ensuring symmetry across items at the start of optimization. Like CrowdBT \citep{chen2013pairwise}, we initialize worker competencies to a constant positive value (0.8), corresponding to a neutral reliability assumption. This avoids early instability while allowing competency differences to emerge purely from observed comparisons.
Worker competencies are updated via closed-form aggregated statistics using vectorized accumulation. A Gaussian prior with fixed variance is incorporated as ridge regularization by adding prior precision to the denominator of each worker update. This shrinks poorly observed workers toward zero and prevents degeneracy.
Within each EM iteration, reward and competency updates are alternated multiple times to accelerate convergence. 
After each outer iteration, identifiability constraints are enforced: rewards are centered to zero mean and normalized to unit root-mean-square magnitude, with worker competencies rescaled accordingly to preserve the likelihood.
The competency prior variance $\sigma_\beta^2$ imposes a Gaussian prior on worker competencies $\beta_w$, which appears as an additional ridge term in the update:

\[
\beta_s \leftarrow \frac{\sum_{(w,l)\in \mathcal{I}_s} \frac{1}{2}(r_w - r_l)}
{\sum_{(w,l)\in \mathcal{I}_s} \kappa_{wl} (r_w - r_l)^2 + \sigma_\beta^{-2}},
\]

where $\mathcal{I}_s$ is the set of comparisons performed by worker $s$, and $\kappa_{wl}$ are the Polya-Gamma expectations defined in Equation \eqref{eq:kappas}.
A smaller $\sigma_\beta$ increases the prior precision $\sigma_\beta^{-2}$, shrinking $\beta_w$ toward $0$ (strong regularization), whereas a larger $\sigma_\beta$ reduces the prior influence, allowing $\beta_w$ to vary more freely. We set the default value of $\sigma_\beta$ as 1.0.

The reward updates solve a linear system $(H + \lambda_r I) r = b$, where $H$ depends on the worker competencies and Polya-Gamma expectations. 
The regularization coefficient $\lambda_r$ is scaled with the mean of $\kappa_{wl}$ to ensure numerical stability and prevent overfitting to sparse comparisons:
\[
\lambda_r = \lambda_{r,\mathrm{base}} \cdot \max(\bar{\kappa}, 10^{-12}),
\]
where $\bar{\kappa} = \frac{1}{|\mathcal{D}|} \sum_{(w,l)\in \mathcal{D}} \kappa_{wl}$ is the mean Polya-Gamma expectation over all comparisons, 
and $\lambda_{r,\mathrm{base}}$ is a small constant (default $10^{-2}$). 
This acts as a data-dependent ridge, shrinking rewards slightly toward zero, improving convergence stability.

Convergence is monitored via the mean log-likelihood, and optimization terminates when its improvement falls below a fixed tolerance ($10^{-6}$).

Similar to BoRaEM, for HBTL, we set worker competencies to 0.8. Ablation studies are provided in Appendix \ref{sec:pgem_ablation}.

\begin{figure*}[ht]
    \centering
        \begin{subfigure}[t]{0.9\linewidth}
        \centering
        \includegraphics[width=\linewidth]{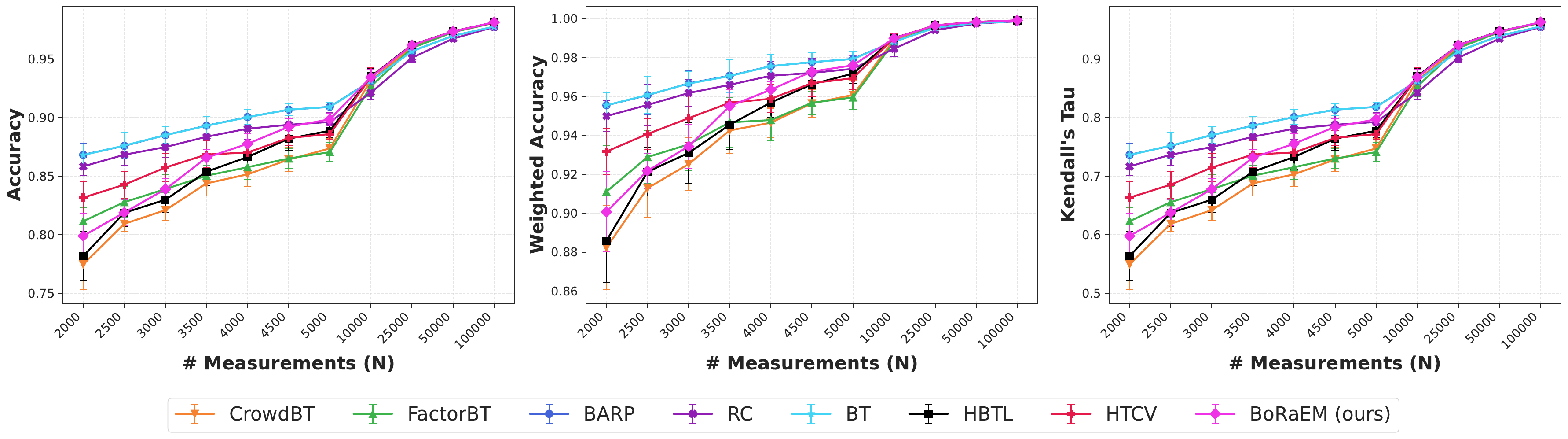}
        \subcaption{Varying \(N\). \(K=100,\,M=500\).}
        \label{fig:sub_vary_N_0_1}
    \end{subfigure}\hfill
    \begin{subfigure}[t]{0.9\linewidth}
        \centering
        \includegraphics[width=\linewidth]{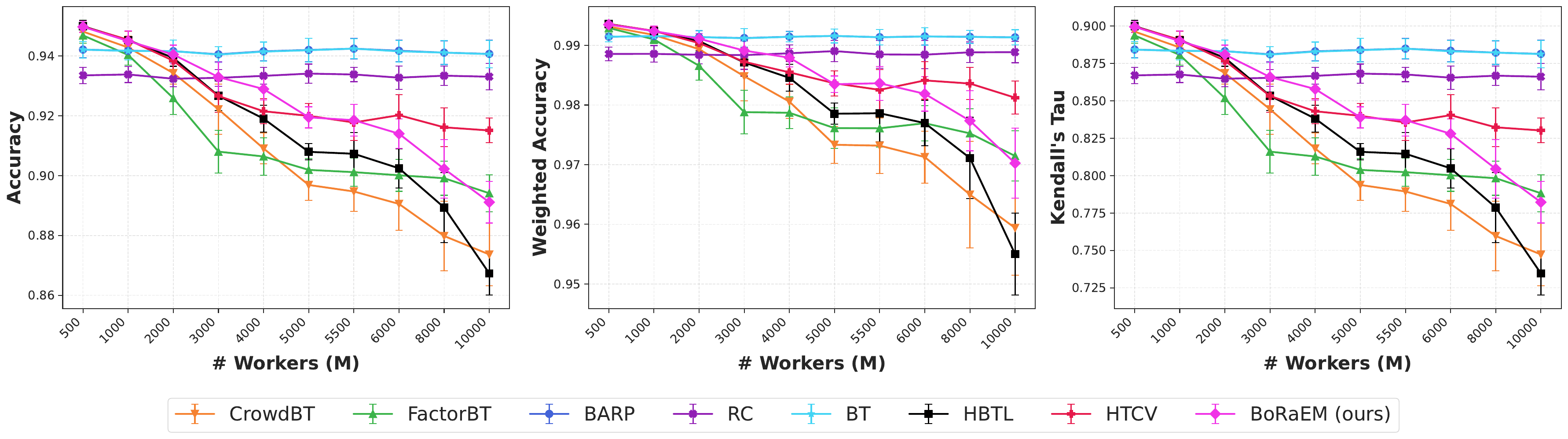}
        \subcaption{Varying \(M\). \(K=200,\,N=25000\).}
        \label{fig:sub_vary_M_0_1}
    \end{subfigure}\hfill
    \begin{subfigure}[t]{0.9\linewidth}
        \centering
        \includegraphics[width=\linewidth]{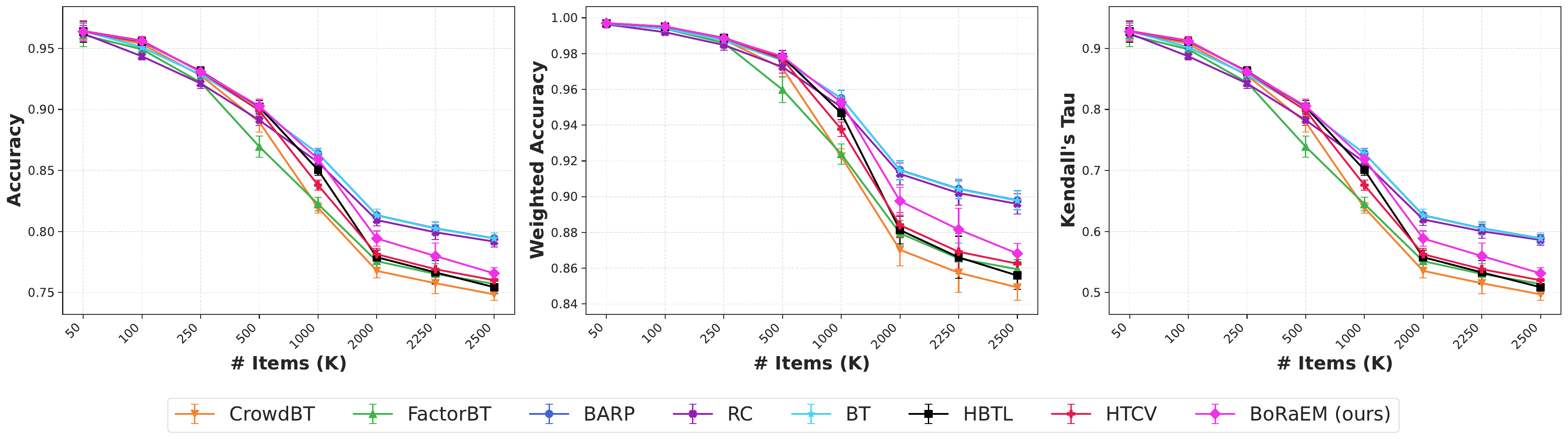}
        \subcaption{Varying \(K\). \(M=20000,\,N=1000\).}
        \label{fig:sub_vary_K_0_1}
    \end{subfigure}
    \caption{Varying \(N\), \(K\), \(M\) individually when \(\beta \in [0,1]\).}
    \label{fig:vary_all_0_1}
\end{figure*}

\section{Synthetic Dataset Experiments: $\beta \in [0,1]$}
\label{app:syn_data_beta_0_1}

In this regime $\beta \in [0,1]$, we do not consider malicious workers ($\beta \in [-1, 0]$). Figure \ref{fig:vary_all_0_1} shows the results when we individually vary each parameter. The details are as follows:

\begin{enumerate}
    \item \textbf{Varying N, Keeping K and M constant:}
We conduct an experiment with fixed values of the number of items $K = 100$ and the number of workers $M = 500$, while varying the number of measurements $N \in \{2k, 2.5k, 3k, 3.5k, 4k, 4.5k, 5k, 10k, 25k, 50k, 100k\}$ to examine the effect of measurement volume on various techniques.
As seen in Figure \ref{fig:sub_vary_N_0_1}, with less measurements the accuracy of methods such as BoRaEM, HBTL and CrowdBT is lower than other methods because more parameters need to be estimated from less number of measurements. Amongst, BoRaEM, HBTL and CrowdBT, we see that BoRaEM and HBTL consistently perform better than CrowdBT in sparser settimngs (e.g. till N = 10000).

\item \textbf{Varying M, Keeping N and K constant:}
We vary the number of workers $M \in \{500, 1k, 2k, 3k, 4k, 5k, 5.5k, 6k, 8k\}$ while fixing the number of items at $K = 200$ and the number of measurements at $N = 25k$, to analyze the impact of worker scale.
As seen in Figure \ref{fig:sub_vary_M_0_1}, HTCV seems to be the most stable alogrithm as compared to BoRaEM, HBTL and CrowdBT.

\item \textbf{Varying K, Keeping N and M constant:}
We vary the number of items $K \in \{50, 100, 250, 500, 1000, 1250, 1500, 1750, 2250, 2500\}$ while fixing the number of workers at $M = 20k$ and the number of measurements at $N = 1k$, in order to study the effect of item scale.
Similar to varying $N$ and $M$, by varying $K$ (Figure \ref{fig:sub_vary_K_0_1}), we observe the same trend where the performance of BoRaEM, HBTL, HTCV and CrowdBT  get affected when number of item increases whereas BoRaEM and HTCV seem to be more robust than CrowdBT.
    
\end{enumerate}

Although Figure \ref{fig:vary_all_0_1} suggests that methods such as \texttt{BTL} and Rank Centrality (\texttt{RC}) are better than BoRaEM, HBTL, HTCV, FactorBT and CrowdBT in sparser settings for $\beta \in [0,1]$, Section \ref{sec:experimental_analysis} (Figure \ref{fig:vary_all_-1_1}) demonstrates how methods that do not model annotator competence can suffer significantly in the presence of adversaries. Section \ref{sec:spammer_analysis} further shows how these methods degrade with the addition of spammers and adversaries in real-world datasets.

\section{Spammer Addition in FaceAge Dataset: More details}
\label{app:spammer_faceage_more}

\begin{figure*}[htbp]
    \centering
    
    \begin{subfigure}[t]{0.9\linewidth}
        \centering
        \includegraphics[width=\linewidth]{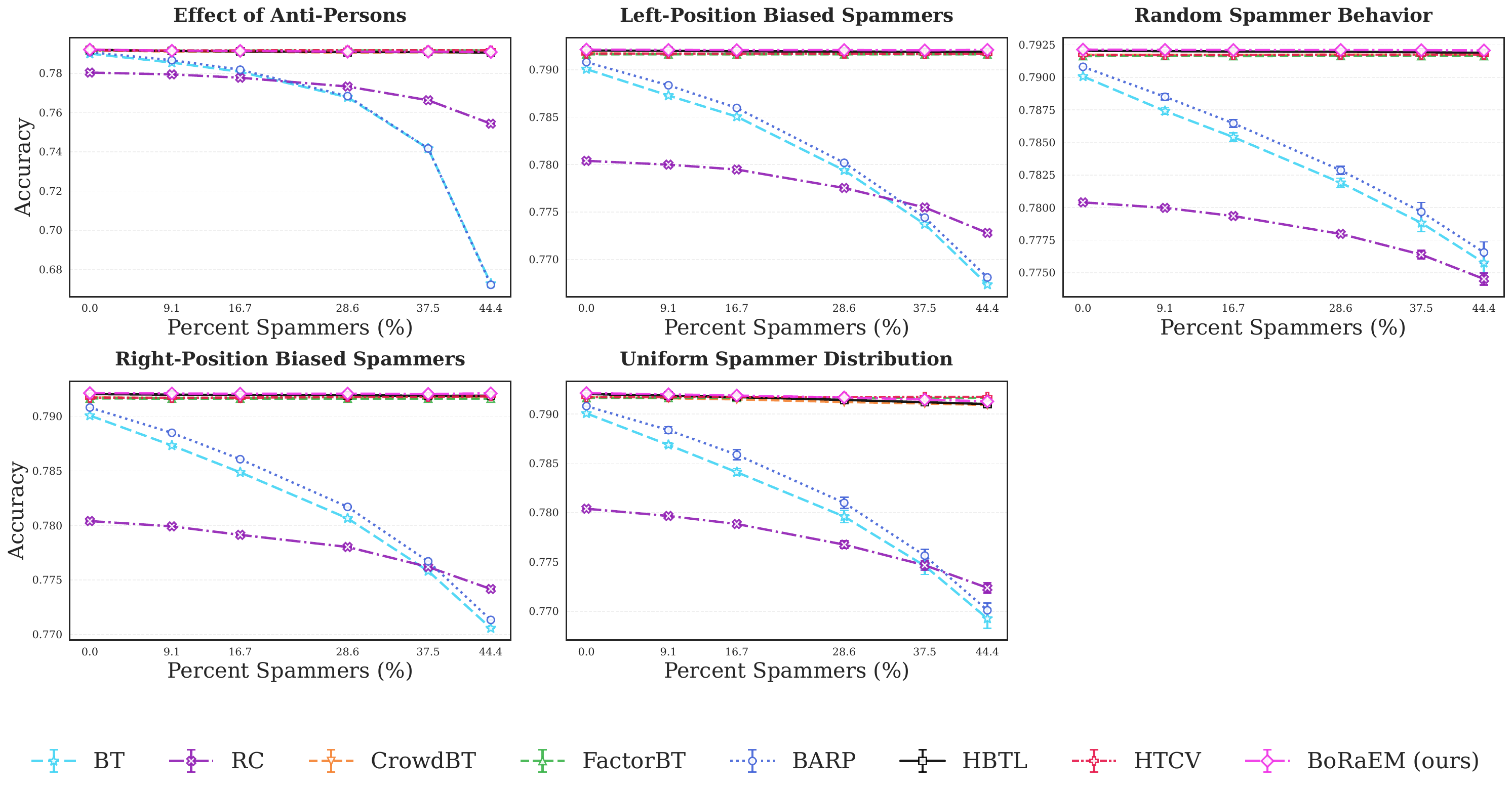}
        \caption{Accuracy ($acc$)}
        \label{fig:spammer_faceage_acc}
    \end{subfigure}
    \hfill
    \begin{subfigure}[t]{0.9\linewidth}
        \centering
        \includegraphics[width=\linewidth]{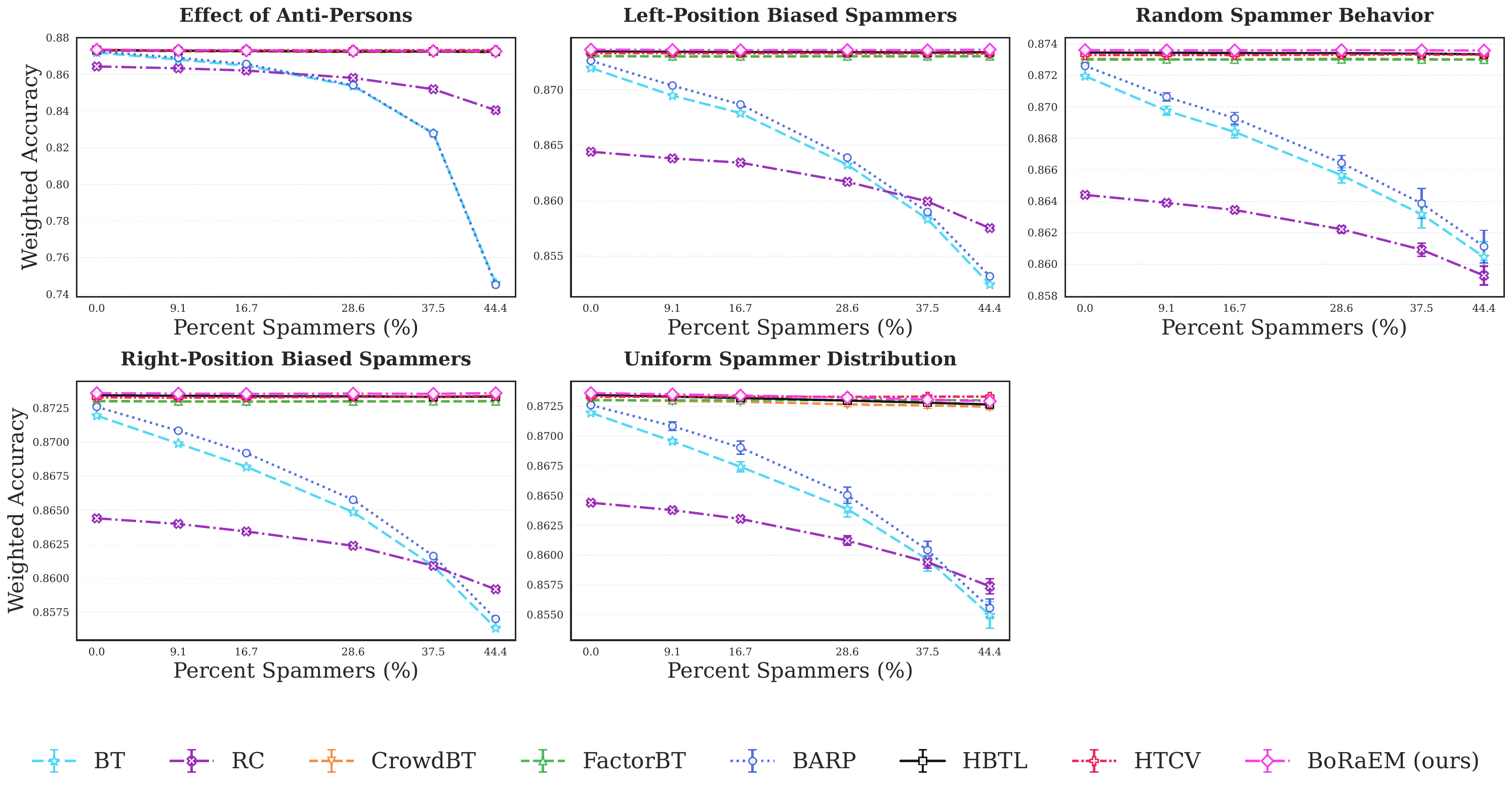}
        \caption{Weighted Accuracy ($wacc$)}
        \label{fig:spammer_faceage_wacc}
    \end{subfigure}
    
    \caption{Performance of different ranking methods under varying proportions of spammers introduced into the \textsc{FaceAge} dataset.}
    \label{fig:spammer_faceage_addition}
    
\end{figure*}

In Section~\ref{sec:spammer_analysis}, we showed that results based on Kendall’s tau demonstrate how methods such as \texttt{FactorBT}, \texttt{BT}, \texttt{BARP}, and \texttt{RC} degrade under spammer addition, whereas \texttt{BoRaEM}, \texttt{HBTL}, and \texttt{CrowdBT} remain robust due to their explicit modeling of annotator competence. Figures \ref{fig:spammer_faceage_acc} and \ref{fig:spammer_faceage_wacc} present the results in terms of accuracy and weighted accuracy when various types of spammers are added to the FaceAge dataset. A similar performance pattern to that observed with Kendall’s tau is evident for these metrics as well.

\begin{figure*}[htbp]
    \centering
    
    \begin{subfigure}[t]{0.8\linewidth}
        \centering
        \includegraphics[width=\linewidth]{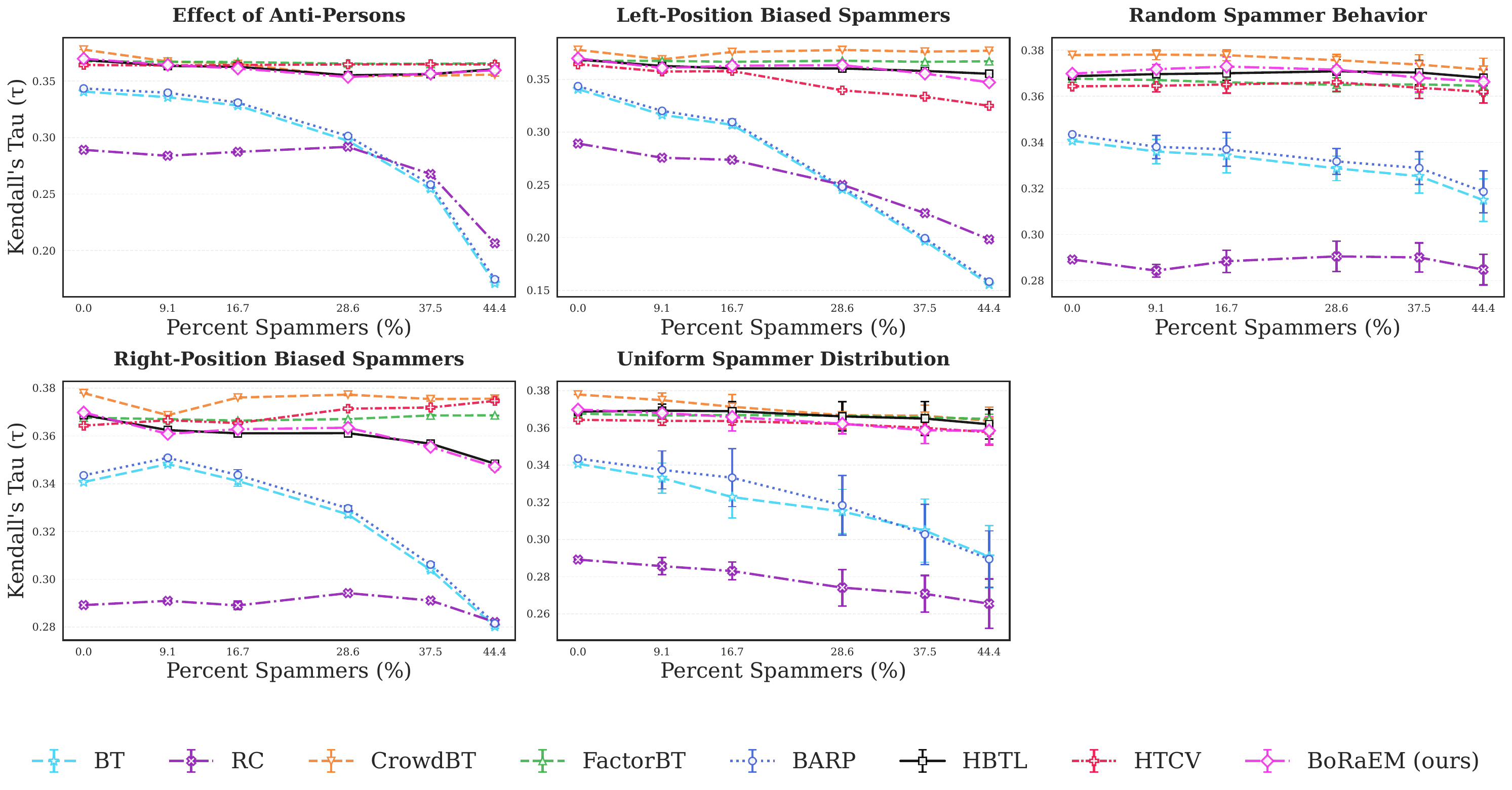}
        \caption{Kendall's tau ($\tau$)}
    \end{subfigure}
    
    \caption{Performance of different ranking methods under varying proportions of spammers introduced into the \textsc{Passage} dataset, measured using Kendall's tau ($\tau$).}
    \label{fig:spammer_passage_tau}
\end{figure*}

\begin{figure*}[htbp]
    \centering
    
    \begin{subfigure}[t]{0.8\linewidth}
        \centering
        \includegraphics[width=\linewidth]{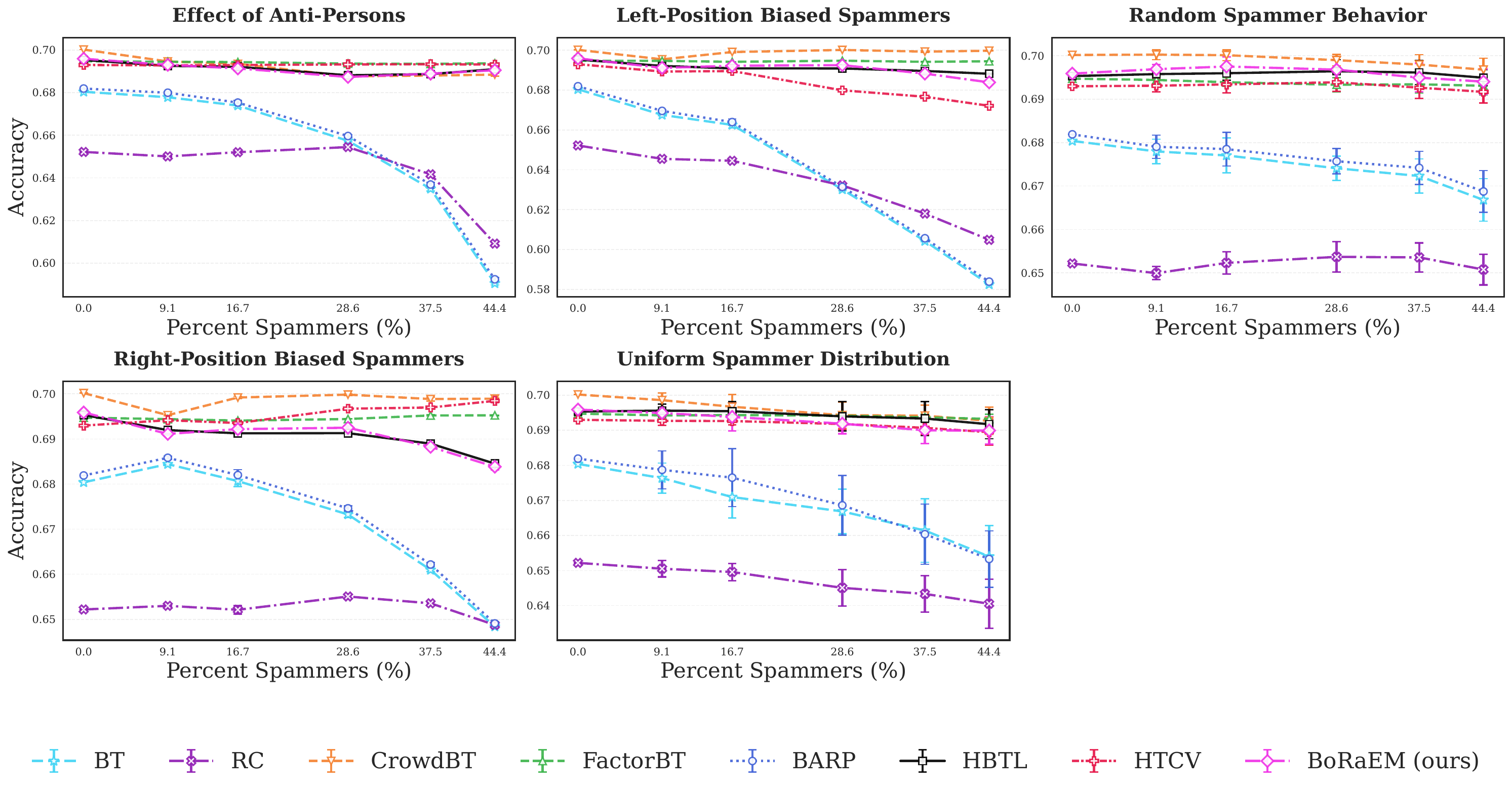}
        \caption{Accuracy ($acc$)}
        \label{fig:spammer_passage_acc}
    \end{subfigure}
    
    
    \begin{subfigure}[t]{0.8\linewidth}
        \centering
        \includegraphics[width=\linewidth]{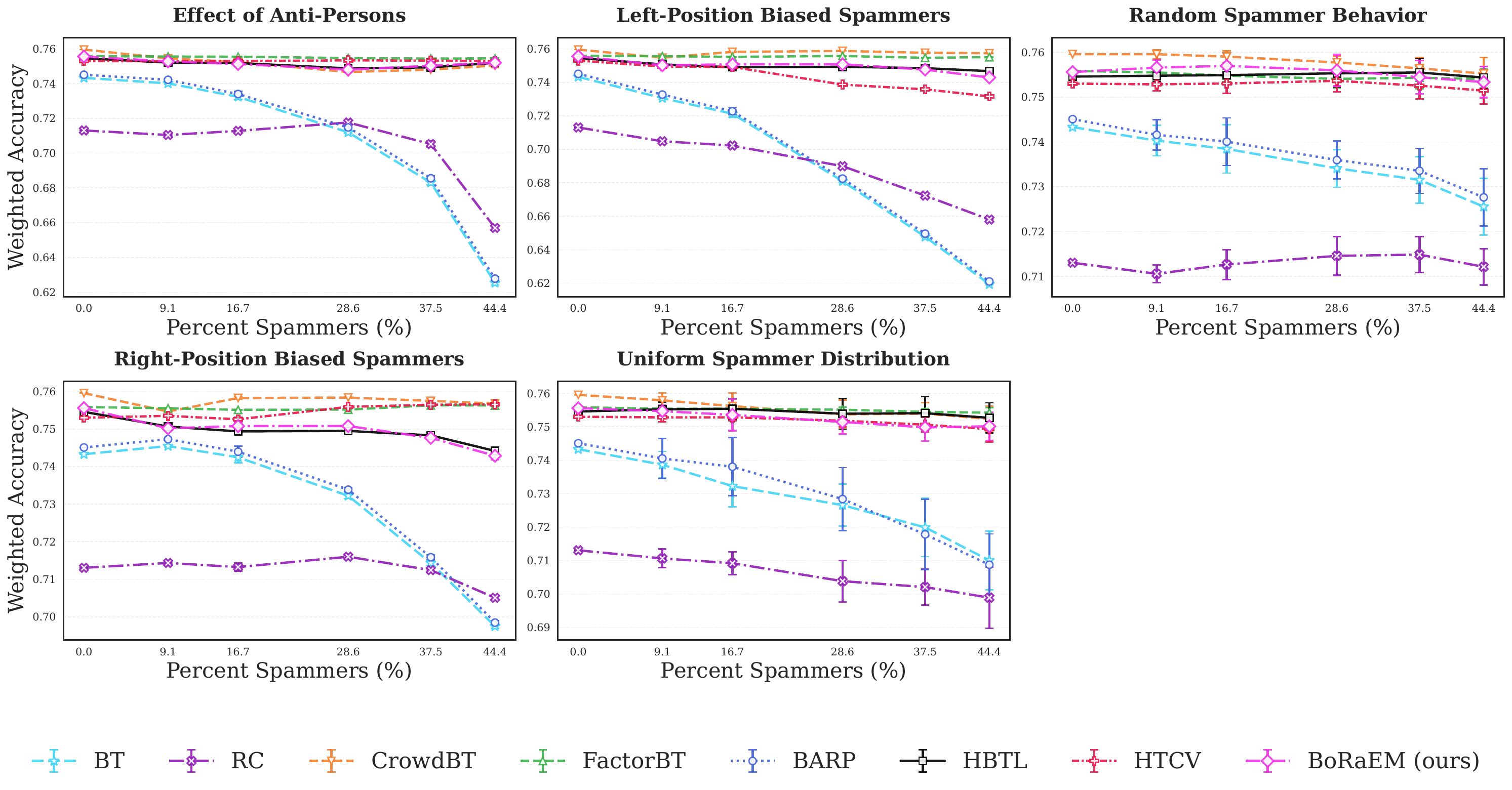}
        \caption{Weighted Accuracy ($wacc$)}
        \label{fig:spammer_passage_wacc}
    \end{subfigure}
    
    \caption{Performance of different ranking methods under varying proportions of spammers introduced into the \textsc{Passage} dataset, measured using Accuracy ($acc$) and Weighted Accuracy ($wacc$).}
    \label{fig:spammer_passage_acc_wacc}
\end{figure*}

\section{Spammer Addition in Passage Dataset}
\label{app:spammer_passage}
In this section, we present the experimental results of spammer addition on the Passage dataset. Figures \ref{fig:spammer_passage_tau} and \ref{fig:spammer_passage_acc_wacc} shows the results using Kendall’s tau as the evaluation metric. \texttt{CrowdBT}, \texttt{HBTL}, \texttt{HTCV}, \texttt{FactorBT} and \texttt{BoRaEM} exhibit robustness to spammers, whereas the remaining methods suffer substantially in their presence.

Similar trends are observed in Figures \ref{fig:spammer_passage_acc} and \ref{fig:spammer_passage_wacc}, which report results in terms of accuracy and weighted accuracy, respectively. 
Models such as BT, FactorBT, Rank Centrality, BARP suffer heavily in presence of spammers and malicious adversaries.
These findings further emphasize the importance of modeling annotator competence or susceptibility to bias in crowdsourced settings.



\section{Ablation Study}
\label{sec:pgem_ablation}

\subsection{Initialization of Worker Competencies}
As discussed in Appendix~\ref{app:pgem_implementation}, we initialize all worker competency values to a fixed positive value (0.8) in case of BoRaEM and HBTL, corresponding to a neutral reliability assumption
to avoid early instability and allowing competency differences to emerge purely from observed comparisons.
Similarly, for CrowdBT, we follow \citet{ferrara2024bias} and initialize worker competencies to 0.7. 

In Section~\ref{sec:related_work}, we discussed about the objective function for FactorBT.
For a comparison between items $d_i$ and $d_j$ by worker $s$, described by a feature vector $\mathbf{x}_{sij} \in \mathbb{R}^M$, where $M$ is the feature dimension. The probability that $d_i$ is preferred over $d_j$ is given  as follows 

\begin{equation}
    \Pr(d_i \succ_s d_j) = \sigma(\gamma_s) \cdot \sigma(r_i - r_j) + (1 - \sigma(\gamma_s)) \cdot \sigma(\langle \boldsymbol{b}_s, \mathbf{x}_{sij} \rangle)
    \label{eq:factorbt_objective}
\end{equation}
where $r_i, r_j$ are the latent scores of items $d_i$ and $d_j$, $\gamma_s$ controls the susceptibility of worker $s$ to bias and $\sigma(\gamma_s)$ can be considered as the the skill of the worker same as that of $\beta_s$ in our formulation, $\boldsymbol{b}_s$ encodes the worker-specific sensitivity to task features, $\sigma(\cdot)$ is the sigmoid function. The model estimates parameters $\{r_i\}$, $\{\gamma_s\}$, and $\{\boldsymbol{b}_s\}$ by maximizing the regularized log-likelihood of the observed pairwise comparisons.
For FactorBT, we uniformly initialize the worker logits ($\gamma_s$)  to $0.7$, which corresponds to initializing worker competencies as $\sigma(0.7) \approx 0.668$.

\paragraph{Effect of initialization.}
In this ablation study, we analyze the effect of initialization on the performance of BoRaEM, HBTL, and CrowdBT on the FaceAge dataset \citep{pavlichenko2021imdb}. The initialization parameter represents annotator competency, denoted by $\beta$.

For BoRaEM and HBTL, we vary $\beta$ from $-1$ to $1$ in increments of $0.1$. For CrowdBT, we vary $\beta$ from $0$ to $1$ in increments of $0.1$, consistent with its probabilistic interpretation.

\begin{figure*}[ht]
    \centering
    \includegraphics[width=\linewidth]{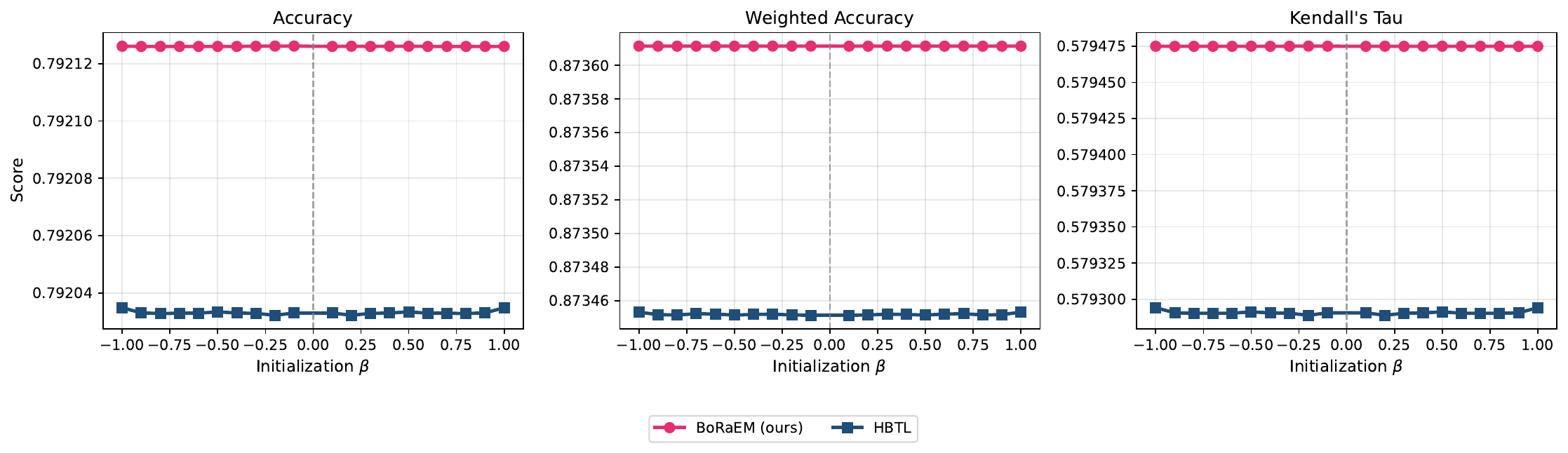}
    \caption{Effect of initialization on BoRaEM and HBTL. Both methods exhibit symmetry around $\beta = 0$, indicating that initializing with competency $\beta$ or $-\beta$ leads to identical performance. This symmetry arises because their likelihood depends on the term $\sigma(\beta_s(r_w - r_l))$, which is invariant under simultaneous sign reversal of competency and score differences.}
    \label{fig:ablation_pgem_gradem}
\end{figure*}

\begin{figure*}[ht]
    \centering
    \includegraphics[width=\linewidth]{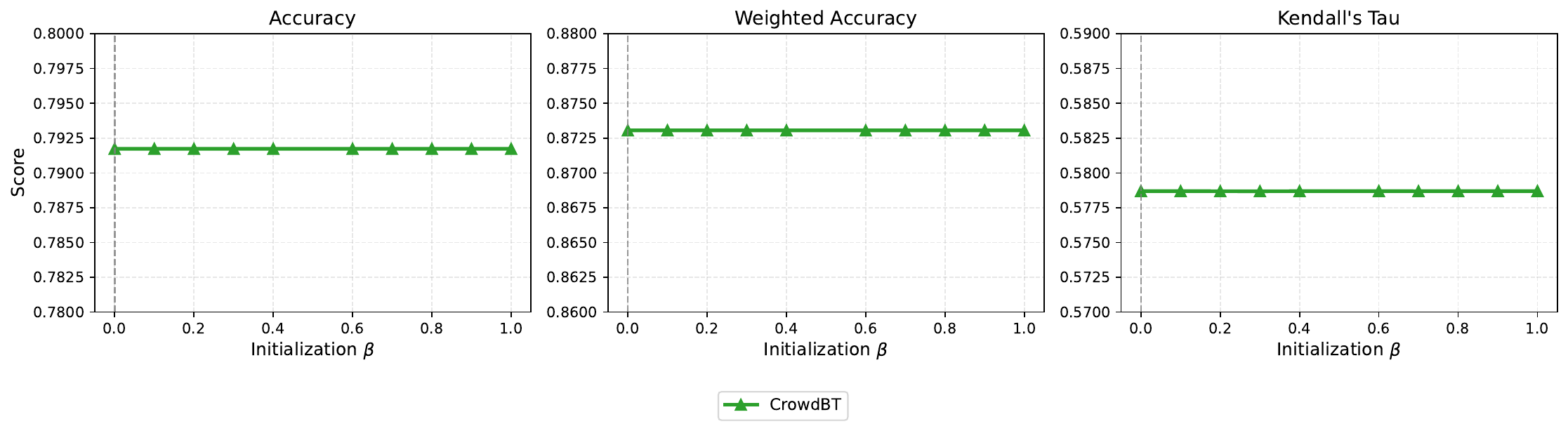}
    \caption{Effect of initialization on CrowdBT. The method exhibits symmetry around $\beta = 0.5$, meaning that initialization with $\beta$ or $1 - \beta$ produces identical results. This follows from the model structure, where $\beta$ and $1-\beta$ represent complementary competency assumptions in the likelihood formulation.}
    \label{fig:ablation_crowdbt}
\end{figure*}

\begin{figure*}[ht]
    \centering
    \includegraphics[width=\linewidth]{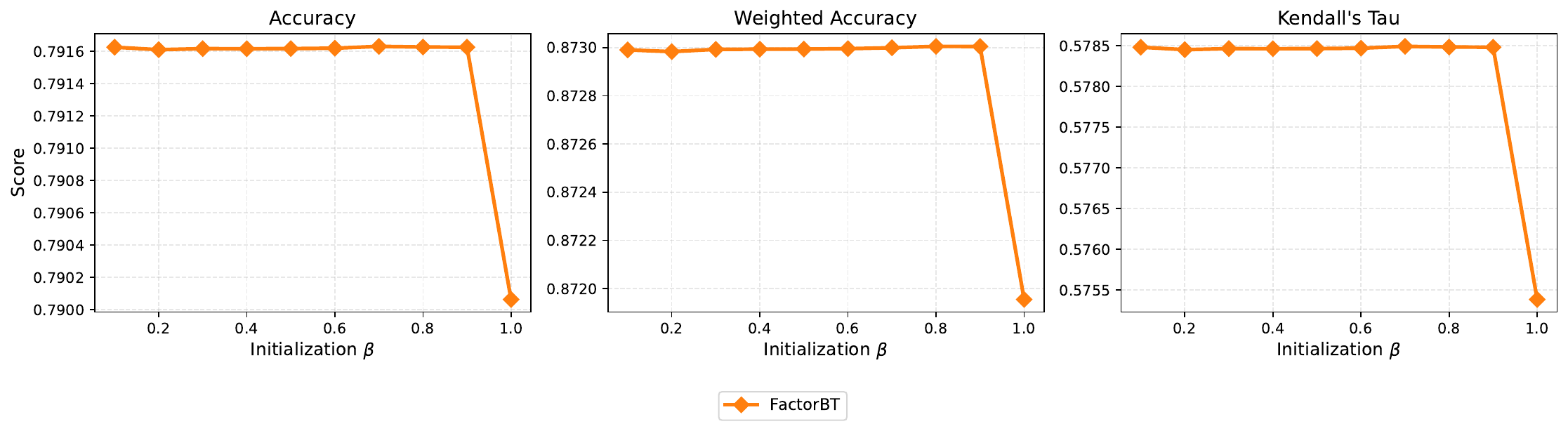}
    \caption{Effect of initialization for FactorBT by varying the initial worker competency, $\beta_s = \sigma(\gamma_s)$.}
    \label{fig:ablation_factorbt}
\end{figure*}

Figures \ref{fig:ablation_pgem_gradem} and \ref{fig:ablation_crowdbt} present ablation studies on the effect of initialization for BoRaEM and HBTL, CrowdBT respectively.

Figure~\ref{fig:ablation_factorbt} presents an ablation study on the effect of initialization for FactorBT by varying the initial worker competency, $\beta_s = \sigma(\gamma_s)$. When $\beta_s$ is initialized to $0$, the component of $\beta_s\sigma(r_i-r_j)$ in Equation~\eqref{eq:factorbt_objective} is completely disabled, reducing the model to a pure worker-bias model that cannot learn meaningful item scores. Conversely, when $\beta_s$ is initialized to $1$, the worker-bias component, $\sigma(\langle \mathbf{b}_s, \mathbf{x}_{sij} \rangle)$, is effectively disabled at initialization, causing the model to behave as the standard Bradley--Terry model. Since $\gamma_s=\operatorname{logit}(\beta_s)$, initializing $\beta_s$ to $1$ yields $\gamma_s\rightarrow\infty$, for which the skill gradients vanish because $\sigma'(\gamma_s)=\sigma(\gamma_s)(1-\sigma(\gamma_s))=0$. For intermediate initializations, the model's performance remains stable, indicating that its performance is largely insensitive to the choice of initialization.



This study demonstrates the robustness of all methods to uniform initialization, where all workers are assigned the same initial competency value, with performance remaining largely stable across the initialization range.

\begin{figure*}[htbp]
    \centering
    \includegraphics[width=0.9\linewidth]{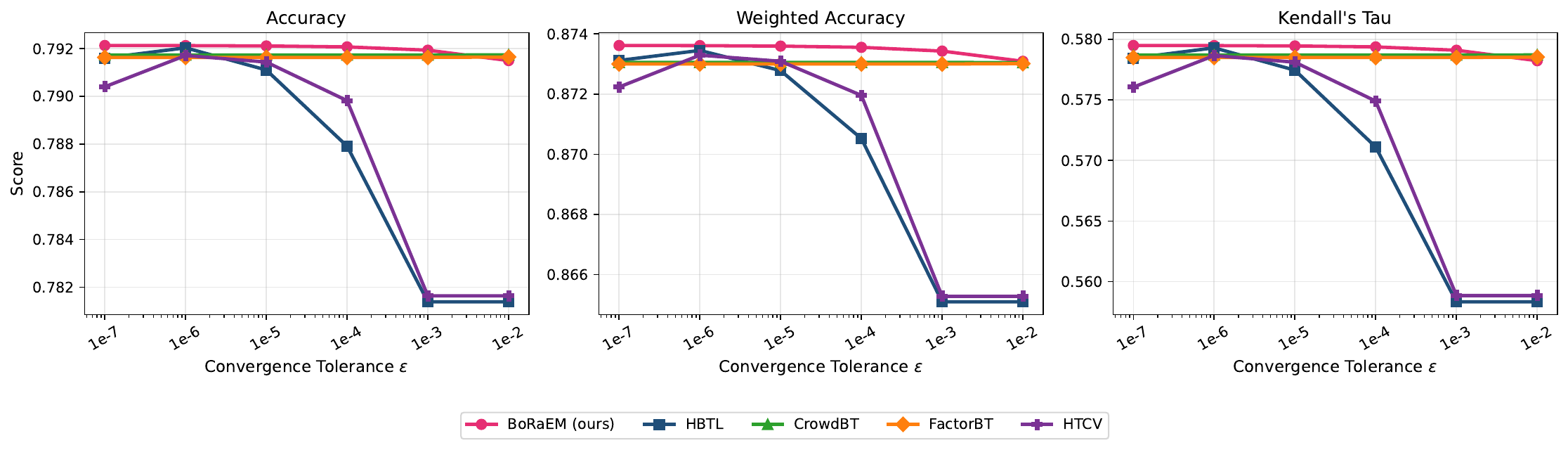}
    \caption{Impact of the mean log-likelihood convergence tolerance ($\epsilon$) on the performance of BoRaEM, HBTL, HTCV, CrowdBT, and FactorBT. The x-axis is shown in decreasing order, from loose to strict convergence. Both methods remain largely stable across the tested range of $\epsilon$, with minor improvements in Accuracy and Kendall's Tau observed for smaller values (stricter convergence). These results demonstrate that the EM updates converge reliably and are not overly sensitive to the choice of stopping criterion, confirming that the default setting of $\epsilon = 10^{-6}$ is within the optimal regime.}
    \label{fig:ablation_epsilon}
\end{figure*}

\subsection{Sensitivity to Mean Log-Likelihood Convergence Tolerance}
As described in Appendix~\ref{app:pgem_implementation}, we set the mean log-likelihood convergence tolerance parameter ($\epsilon$) to $10^{-6}$. 

Figure~\ref{fig:ablation_epsilon} illustrates the sensitivity of BoRaEM, HBTL, HTCV, CrowdBT and FactorBT to different values of the convergence tolerance $\epsilon$. The results show that both methods maintain stable performance across the range, with only minor gains in Accuracy and Kendall's Tau for stricter convergence, supporting the choice of $\epsilon = 10^{-6}$ as a robust default.

\subsection{BoRaEM-Specific Ablation Studies}
\subsubsection{Sensitivity to Competency Prior Variance ($\sigma_\beta$)}
As described in Appendix~\ref{app:pgem_implementation}, we set the Competency Prior Variance parameter ($\sigma_\beta$) to $1.0$. 

\begin{figure*}[htbp]
    \centering
    \includegraphics[width=0.9\linewidth]{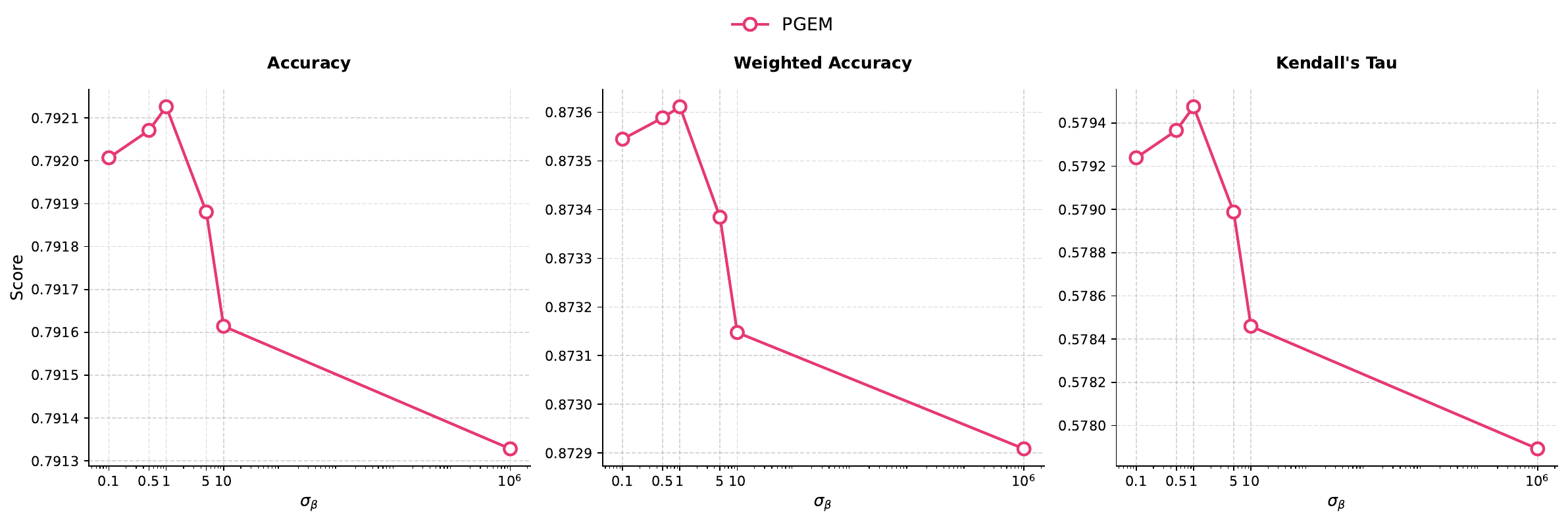}
    \caption{Effect of the competency prior variance ($\sigma_\beta$) on BoRaEM. This hyperparameter controls the strength of the Gaussian prior over worker competencies ($\beta$), where smaller values shrink competencies closer to zero and larger values allow more variability. The results show that both methods are largely stable across a wide range of $\sigma_\beta$, with minor improvements in Accuracy and Kendall's Tau when the prior is moderately loose. This indicates that the EM updates are robust to the choice of $\sigma_\beta$, and the default setting ($\sigma_\beta = 1.0$) provides a reasonable balance between regularization and flexibility.}
    \label{fig:ablation_sigma_beta}
\end{figure*}

The competency prior variance ($\sigma_\beta^2$) controls the strength of the Gaussian prior on worker competencies $\beta_w$ in the EM updates. 
A smaller $\sigma_\beta$ increases the prior precision $\sigma_\beta^{-2}$, which shrinks $\beta_w$ toward $0$ (strong regularization), while a larger $\sigma_\beta$ reduces the prior influence, allowing $\beta_w$ to vary more freely. In the extreme case of $\sigma_\beta \to \infty$, the prior is effectively removed, which can lead to overfitting and slightly lower stability in the estimates. The results above reflect this trade-off: moderate $\sigma_\beta$ (e.g., $1.0$) balances regularization and flexibility, producing optimal Accuracy and Kendall's Tau.

\subsubsection{Sensitivity to Reward Regularization Coefficient ($\lambda_{r,\mathrm{base}}$)}

As described in Appendix~\ref{app:pgem_implementation}, we set the Reward Regularization Coefficient Parameter ($\lambda_{r,\mathrm{base}}$) to $10^{-2}$. 

BoRaEM demonstrates remarkable robustness to the choice of $\lambda_{r,\mathrm{base}}$ except at extreme values. This relative insensitivity suggests that while the regularization is essential for numerical conditioning in the linear system, the model's primary predictive power is driven by the underlying Pólya-Gamma latent variable framework rather than sensitive hyperparameter tuning.

\begin{figure*}[htbp]
    \centering
    \includegraphics[width=0.9\linewidth]{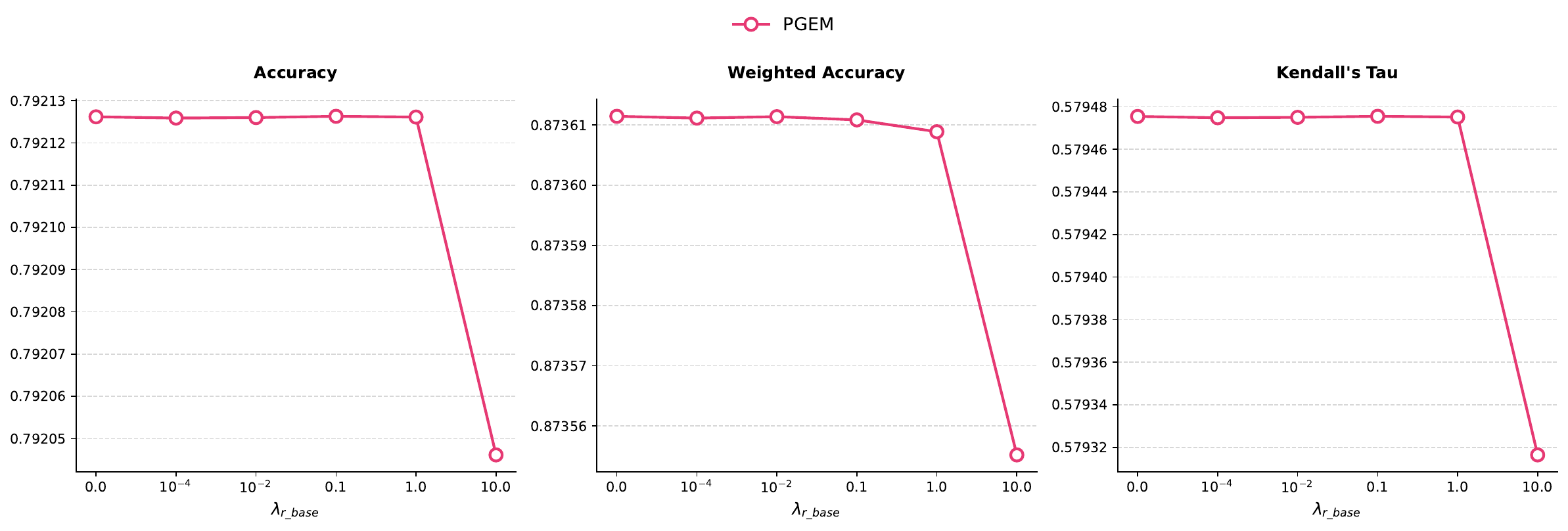}
    \caption{Effect of the reward regularization coefficient $\lambda_{r,\mathrm{base}}$ on BoRaEM performance.}
    \label{fig:ablation_lambda_r}
\end{figure*}

\section{Hardware Details}
\label{sec:hardware}
The experiments were conducted on a server with dual-socket Intel Xeon Gold 6348 CPUs (28 cores / 56 threads per socket, 2.6\,GHz base, 3.5\,GHz turbo, AVX-512), 503\,GiB RAM, and 4 NVIDIA A100 GPUs. The L1, L2, and L3 caches are 2.6\,MiB, 70\,MiB, and 84\,MiB, respectively, across 2 NUMA nodes. The system runs Ubuntu 22.04 LTS.

\end{document}